\newcommand{\bp}{\boldsymbol{p}}
\newcommand{\bq}{\boldsymbol{q}}
\newcommand{\pbar}{\overline{p}}
\newcommand{\qbar}{\overline{q}}
\newcommand{\field}[1]{\mathbb{#1}}
\newcommand{\scX}{\mathcal{X}}
\newcommand{\E}{\field{E}}
\renewcommand{\Pr}{\field{P}}
\newcommand{\theset}[2]{ \left\{ {#1} \,:\, {#2} \right\} }
\newcommand{\Ind}[1]{ \field{I}\left\{{#1}\right\} }
\newcommand{\sgn}{\mbox{\sc sgn}}
\renewcommand{\ss}{\subseteq}
\newcommand{\wh}{\widehat}
\newcommand{\ve}{\varepsilon}
\newcommand{\muhat}{\wh{\mu}}
\newcommand{\Yhat}{\wh{Y}}
\newcommand{\spin}{\{-1,+1\}}
\newcommand{\din}{d_{\mathrm{in}}}
\newcommand{\hdout}{\wh{d}_{\mathrm{out}}}
\newcommand{\hdin}{\wh{d}_{\mathrm{in}}}
\newcommand{\dout}{d_{\mathrm{out}}}
\newcommand{\Nin}{\mathcal{E}_{\mathrm{in}}}
\newcommand{\Nout}{\mathcal{E}_{\mathrm{out}}}
\newcommand{\NNin}{\mathcal{N}_{\mathrm{in}}}
\newcommand{\NNout}{\mathcal{N}_{\mathrm{out}}}
\newcommand{\NN}{\mathcal{N}}
\newcommand{\htr}{\wh{tr}}
\newcommand{\hun}{\wh{un}}
\newcommand{\iin}{i_{\mathrm{in}}}
\newcommand{\iout}{i_{\mathrm{out}}}
\newcommand{\jin}{j_{\mathrm{in}}}
\newcommand{\uin}{U_{\mathrm{in}}}
\newcommand{\uout}{U_{\mathrm{out}}}
\newcommand{\Psiin}{\Psi_{\mathrm{in}}}
\newcommand{\Psiout}{\Psi_{\mathrm{out}}}
\newcommand{\hdeltain}{\wh{\delta}_{\mathrm{in}}}
\newcommand{\hdeltaout}{\wh{\delta}_{\mathrm{out}}}
\newcommand{\tauhat}{\wh{\tau}}
\newcommand{\eps}{\epsilon}
\newtheorem{lemma}{Lemma}
\newtheorem{theorem}{Theorem}
\newcolumntype{M}[1]{>{\centering\arraybackslash}m{#1}}
\newcolumntype{N}{@{}m{0pt}@{}}
\newcommand{\wik}{\textsc{Wikipedia}}
\newcommand{\sla}{\textsc{Slashdot}}
\newcommand{\epi}{\textsc{Epinion}}
\newcommand{\kiw}{\textsc{Wik. Edits}}
\newcommand{\aut}{\textsc{Citations}}
\newcommand{\ssn}{signed social networks}
\newcommand{\eij}{\ensuremath{i\rightarrow j}}
\newcommand{\eji}{\ensuremath{j\rightarrow i}}
\newcommand{\yij}{y_{i, j}}
\newcommand{\yhat}{\widehat{y}}
\DeclareMathOperator{\rank}{rank}
\newcommand{\diam}[1]{\ensuremath{\mathrm{diam}\left(#1\right)}}
\newcommand{\trainset}{\ensuremath{E_0}}
\newcommand{\vfirst}[1]{\mathbf{\textcolor{brown}{#1}}}
\newcommand{\vsecond}[1]{\mathit{\textcolor{red}{#1}}}
\newcommand{\vfirstSig}[1]{\mathbf{\textcolor{brown}{\underline{#1}}}}
\newcommand{\vsecondSig}[1]{\vsecond{#1}}
\newcommand{\comptriads}{\textsc{16 Triads}}
\newcommand{\complowrank}{\textsc{LowRank}}
\newcommand{\compranknodes}{\textsc{RankNodes}}
\newcommand{\compbayesian}{\textsc{Bayesian}}
\newcommand{\uslogregp}{\textsc{LogReg}}
\newcommand{\usrule}{\textsc{blc}$(tr,un)$}
\newcommand{\uslprop}{\textsc{L. Prop.}}
\newcommand{\uslpropGsec}{\textsc{L. Prop.}}
\newcommand{\qoptim}{\textsc{Unreg.}}
\begin{document}

%

%

\runningtitle{Edge Sign Prediction in Social Networks}

\runningauthor{Géraud Le Falher, Nicol\`o Cesa-Bianchi, Claudio Gentile, Fabio Vitale}
\twocolumn[

\aistatstitle{On the Troll-Trust Model \\ for Edge Sign Prediction in Social Networks}
\vspace{-0.13in}
\aistatsauthor{Géraud Le Falher$\mathbf{^{(1)}}$ \And Nicol\`o
Cesa-Bianchi$\mathbf{^{(2)}}$ \And Claudio Gentile$\mathbf{^{(3)}}$ \And Fabio Vitale$\mathbf{^{(1,4)}}$}
\aistatsaddress{ {\small $\mathbf{^{(1)}}$ Inria, Univ. Lille, CNRS UMR 9189 --
  CRIStAL, France} \hspace{5em} {\small $\mathbf{^{(2)}}$ Universit\`a degli Studi di Milano, Italy} \\
{\small $\mathbf{^{(3)}}$ University of Insubria, Italy} \hspace{5em} {\small $\mathbf{^{(4)}}$ Department of Computer Science, Aalto University, Finland}}
\vspace{-0.08in}
]


\begin{abstract}
\vspace{-0.1in}
  In the problem of edge sign prediction, we are given a directed graph
(representing a social network), and our task is to predict the binary
labels of the edges (i.e., the positive or negative nature of the social
relationships). Many successful heuristics for this problem are based on the
troll-trust features, estimating at each node the fraction of outgoing and
incoming positive/negative edges. We show that these heuristics can be understood,
and rigorously analyzed, as approximators to the Bayes optimal classifier for a
simple probabilistic model of the edge labels. We then show that the maximum
likelihood estimator for this model approximately corresponds to the predictions
of a Label Propagation algorithm run on a transformed version of the original
social graph. Extensive experiments on a number of real-world datasets show that
this algorithm is competitive against state-of-the-art classifiers in terms of
both accuracy and scalability. Finally, we show that troll-trust
features can also be used to derive online learning algorithms which have
theoretical guarantees even when edges are adversarially labeled.

\end{abstract}

\vspace{-0.2in}
\section{Introduction}
\vspace{-0.1in}
%
Connections in social networks are mostly driven by the {\em homophily} assumption: linked individuals tend to be similar, sharing personality traits, attitudes, or interests. However, homophily alone is clearly not sufficient to explain the variety of social links. In fact, sociologists have long studied networks, hereafter called \emph{signed} social networks, where also {\em negative} relationships ---like dissimilarity, disapproval or distrust--- are explicitly displayed. The presence of negative relationships is also a feature of many technology-mediated social networks. Known examples are \textsc{Ebay}, where users trust or distrust agents in the network based on their personal interactions, \textsc{Slashdot}, where each user can tag another user as friend or foe, and \textsc{Epinion}, where users can rate positively or negatively not only products, but also other users. Even in social networks where connections solely represent friendships, negative links can still emerge from the analysis of online debates among users.


When the social network is signed, specific challenges arise in both network analysis and learning. On the one hand, novel methods are required to tackle standard tasks (e.g., user clustering, link prediction, targeted advertising/recommendation, 
analysis of the spreading of diseases in epidemiological models). On the other hand, new problems such as edge sign prediction, which we consider here, naturally emerge. Edge sign prediction is the problem of classifying the positive or negative nature of the links based on the network topology. Prior knowledge of the network topology is often a realistic assumption, for in several situations the discovery of the link sign can be more costly than acquiring the topological information of the network. For instance, when two users of an online social network communicate on a public web page, we immediately detect a link. Yet, the classification of the link sign as positive or negative may require complex techniques. 

From the modeling and algorithmic viewpoints, because of the huge amount of available networked data, a major concern in developing learning methods for edge sign prediction is algorithmic scalability. Many successful, yet simple heuristics for edge sign prediction are based on the troll-trust features, i.e., on the fraction of outgoing negative links (trollness) and incoming positive links (trustworthiness) at each node. We study such heuristics by defining a probabilistic generative model for the signs on the directed links of a given network, and show that these heuristics can be understood and analyzed as approximators to the Bayes optimal classifier for our generative model. We also gather empirical evidence supporting our probabilistic model by observing that a logistic model trained on trollness and trustworthiness features alone is able to learn weights that, on all datasets considered in our experiments, consistently satisfy the properties predicted by our model.

We then introduce suitable graph transformations defining reductions from edge sign prediction to node sign prediction problems. This opens up the possibility of using the arsenal of known algorithmic techniques developed for node classification. In particular, we show that a Label Propagation algorithm, combined with our reduction, approximates the maximum likelihood estimator of our probabilistic generative model. Experiments on real-world data show the competitiveness of our approach in terms of both prediction performance (especially in the regime when training data are scarce) and scalability.

Finally, we point out that the notions of trollness and trustworthiness naturally define a measure of complexity, or learning bias, for the signed network that can also be used to design {\em online} (i.e., sequential) learning algorithms for the edge sign prediction problem. The learning bias encourages settings where the nodes in the network have polarized features (e.g., trollness/trustworthiness are either very high or very low). Our online analysis holds under adversarial conditions, namely, without any stochastic assumption on the assignment of signs to the network links.



\vspace{-0.1in}
\subsection{Related work}
\vspace{-0.1in}
Interest in signed networks can be traced back to the psychological theory of structural balance~\cite{cartwright1956structural,heider1958psychology} with its weak version~\cite{davis1967clustering}. The advent of online signed social networks has enabled a more thorough and quantitative understanding of that phenomenon. Among the several approaches related to our work, some extend the spectral properties of a graph to the signed case in order to find good embeddings for classification~\cite{Kunegis2009,SignedEmbedding15}. However, the use of the adjacency matrix usually requires a quadratic running time in the number of nodes, which makes those methods hardly scalable to large graphs. Another approach is based on mining ego networks with SVM. Although this method seems to deliver good results~\cite{Papaoikonomou2014}, the running time makes it often impractical for large real-world datasets. An alternative approach, based on local features only and proposed in~\cite{Leskovec2010}, relies on the so-called status theory for directed graphs~\cite{guha2004propagation}. Some works in active learning, using a more sophisticated bias based on the correlation clustering (CC) index~\cite{TreeStar12,CCCC12}, provide strong theoretical guarantees. However, the bias used there is rather strong, since it assumes the existence of a $2$-clustering of the nodes with a small CC index.

Whereas our focus will be on {\em binary} prediction, researchers have also considered a weighted version of the problem, where edges measure the amount of trust or distrust between two users~(e.g., \cite{guha2004propagation,tang2013exploiting,
Qian2014sn}). Other works have also considered versions of the problem where side information related to the network is available to the learning system. For instance, \cite{EdgeSignsRating15} uses the product purchased on Epinion in conjunction with a neural network, \cite{TrollDetection15} identifies trolls by analysing the textual content of their post, and~\cite{SNTransfer13} uses SVM to perform transfer learning from one network to another. While many of these approaches have interesting performances, they often require extra information which is not always available (or reliable) and, in addition, may face severe scaling issues.
The recent survey~\cite{tang2015survey} contains pointers to many papers on edge sign prediction for signed networks, especially in the Data Mining area. Additional references, more closely related to our work, will be mentioned at the end of Section~\ref{ss:passive}.

\vspace{-0.1in}
\section{Notation and Preliminaries}\label{s:prel}
\vspace{-0.1in}
In what follows, we let $G=(V,E)$ be a {\em directed} graph, whose edges $(i,j) \in E$ carry a binary label $y_{i,j}\in \spin$. The edge labeling will sometimes be collectively denoted by the $|V|\times |V|$ matrix $Y = [Y_{i,j}]$, where $Y_{i,j} = y_{i,j}$ if $(i,j) \in E$, and $Y_{i,j} = 0$, otherwise. The corresponding edge-labeled graph will be denoted by $G(Y) = (V,E(Y))$. We use $\Nin(i)$ and $\Nout(i)$ to denote, respectively, the set of edges incoming to and outgoing from node $i \in V$, with $\din(i) = \big|\Nin(i)\big|$ and $\dout(i) = \big|\Nout(i)\big|$ being the in-degree and the out-degree of $i$. Moreover, $\din^+(i)$ is the number of edges $(k,i)\in\Nin(i)$ such that $y_{k,i} = +1$. We define $\din^-(i)$, $\dout^+(i)$, and $\dout^-(i)$ similarly, so that, for instance, $\dout^-(i)/\dout(i)$ is the fraction of outgoing edges from node $i$ whose label in $G(Y)$ is $-1$. We call $tr(i) = \dout^-(i)/\dout(i)$ the {\em trollness} of node $i$, and $un(i) = \din^-(i)/\din(i)$ the {\em untrustworthiness} of node $i$. Finally, we also use the notation $\NNin(i)$ and $\NNout(i)$ to represent, respectively, the in-neighborhood and the out-neighborhood of node $i \in V$.

Given the directed graph $G = (V,E)$, we define two {\em edge-to-node reductions} transforming the original graph $G$ into other graphs. As we see later, these reductions are useful in turning the edge sign prediction problem into a {\em node} sign prediction problem (often called node classification problem), for which many algorithms are indeed available
---see, e.g., \cite{BC01,ZGL03,HP07,HLP09,cgvz13}.
Although any node classification method could in principle be used, the reductions we describe next are essentially aimed at preparing the ground for quadratic energy-minimization approaches computed through a {\em Label Propagation} algorithm~(e.g., \cite{ZGL03,BDL06}).

The first reduction, called $G \rightarrow G'$, builds an {\em undirected} graph $G' = (V',E')$ as follows. Each node $i \in V$ has two copies in $V'$, call them $\iin$ and $\iout$. Each directed edge $(i,j)$ in $E$ is associated with one node, call it $e_{i,j}$, in $V'$, along with the two undirected edges $(\iout,e_{i,j})$ and  $(e_{i,j},\jin)$. Hence $|V'| = 2|V|+|E|$ and $|E'| = 2|E|$. Moreover, if $G = G(Y)$ is edge labeled, then this labeling transfers to the subset of nodes $e_{i,j} \in V'$, so that $G'$ is a graph $G'(Y) = (V'(Y),E')$ with partially-labeled nodes. 
The second reduction, called $G \rightarrow G''$, builds an {\em undirected and weighted} graph $G'' = (V'',E'')$.
Specifically, we have $V'' \equiv V'$ and $E'' \supset E'$, where the set $E''$ also includes edges $(\iout,\jin)$ for all $i$ and $j$ such that $(i,j) \in E$.
The edges in $E'$ have weight $2$, whereas the edges in $E''\setminus E'$ have weight $-1$.
Finally, as in the $G \rightarrow G'$ reduction, if $G = G(Y)$ is edge labeled, then this labeling transfers to the subset of nodes $e_{i,j} \in V''$. Graph $G'$, which will not be used in this paper, is an intermediate structure between $G$ and $G''$ and provides a conceptual link to the standard cutsize measure in node sign classification.
Figure~\ref{f:etnr} illustrates the two reductions. 


%

\begin{figure*}[t]
\vspace{-0.12in}
  \centering
  \begin{subfigure}[b]{0.28\textwidth}
    \centering \includegraphics[height=.10\textheight]{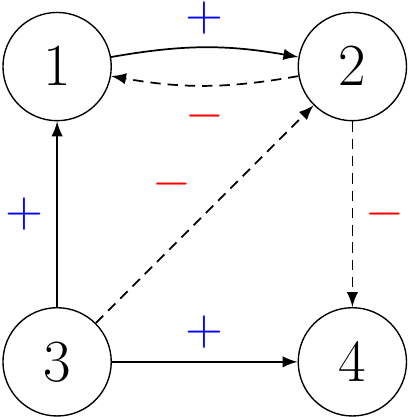} \caption{}
  \end{subfigure}~
  \begin{subfigure}[b]{0.30\textwidth}
    \centering \includegraphics[height=.12\textheight]{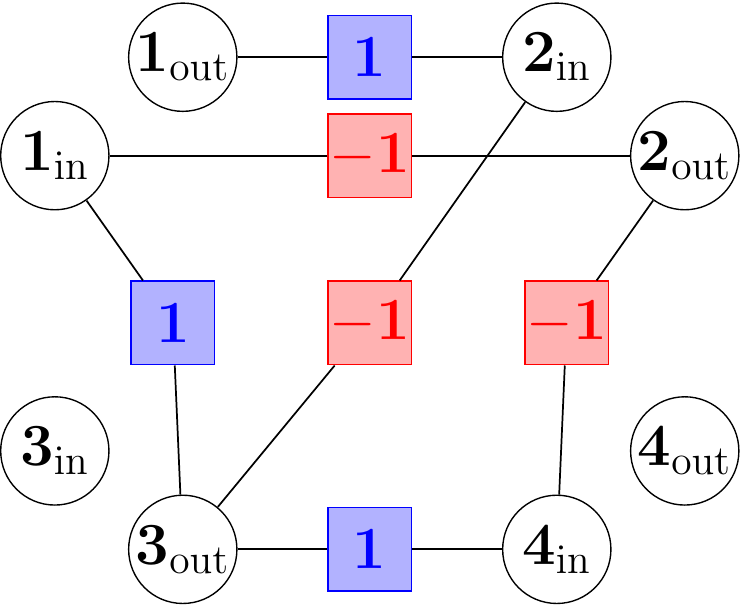} \caption{}
  \end{subfigure}~
  \begin{subfigure}[b]{0.36\textwidth}
    \centering \includegraphics[height=.14\textheight]{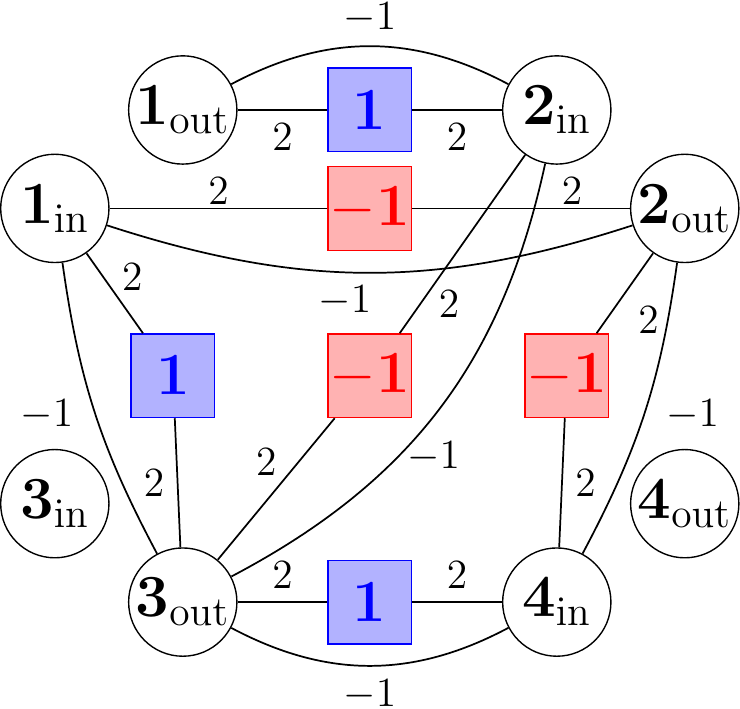} \caption{}
  \end{subfigure}
\vspace{-0.1in}
\caption{\label{f:etnr}
{\bf (a)} A directed edge-labeled graph $G$. {\bf (b)} Its corresponding graph $G'$ resulting from the $G\rightarrow G'$ reduction. The square nodes in $G'$ correspond to the edges in $G$, and carry the same labels as their corresponding edges. On the other hand, the $2|V|$ circle nodes in $G'$ are unlabeled. Observe that some nodes in $G'$ are isolated (and thus unimportant); these are exactly the nodes in $G'$ corresponding to the nodes having in $G$ no outgoing or no incoming edges ---see, e.g., nodes $3$ and $4$ in $G$. {\bf (c)} The weighted graph resulting from the $G\rightarrow G''$ reduction.
}
\vspace{-0.1in}
\end{figure*}

These reductions are meaningful only if they are able to approximately preserve 
label {\em regularity} when moving from edges to nodes. That is, if the edge sign prediction problem is easy for a given $G(Y) = (V,E(Y))$, then the corresponding node sign prediction problems on $G'(Y) = (V'(Y),E')$ and $G''(Y) = (V''(Y),E)$ are also easy, and vice versa.
While we could make this argument more quantitative, here we simply observe that if each node in $G$ tends to be either troll or trustworthy, then few labels from the incoming and outgoing edges of each such node are sufficient to predict the labels on the remaining edges in $G$, and this translates to a small cutsize\footnote
{
Recall that the cutsize of an undirected node-labeled graph $G'(Y)$ is the number of edges in $G'$ connecting nodes having mismatching labels. 
}
of $G'(Y)$ over the nodes corresponding to the edges in $G$ (the colored squares in Figure \ref{f:etnr} (b)).
Again, we would like to point out that these reductions serve two purposes: 
First, they allow us to use the many algorithms designed for the better studied problem of node sign prediction. Second, the reduction $G\rightarrow G''$ with the specific choice of edge weights is designed to make the Label Propagation solution approximate the maximum likelihood estimator associated with our generative model (see Section~\ref{ss:passive}).
Note also that efficient Label Propagation implementations exist that can leverage the sparsity of $G''$.

We consider two learning settings associated with the problem of edge sign prediction: a batch setting and an online setting. In the batch setting, we assume that a training set of edges $E_0$ has been drawn uniformly at random {\em without replacement} from $E$, we observe the labels in $E_0$, and we are interested in predicting the sign of the remaining edges $E \setminus E_0$ by making as few prediction mistakes as possible. 
%
%
The specific batch setting 
we study here assumes that labels are produced by a generative model which we describe in the next section, and our label regularity measure is a quadratic function (denoted by $\Psi^2_{G''}(Y)$ ---see Section~\ref{s:exp} for a definition), related to this model. $\Psi^2_{G''}(Y)$ is small just when all nodes in $G$ tend to be either troll or trustworthy. 

On the other hand, the {\em online} setting we consider is the standard mistake bound model of online learning~\cite{litt88} where all edge labels are assumed to be generated by an adversary and sequentially presented to the learner according to an arbitrary permutation. For an online learning algorithm $A$, we are interested in measuring the total number of mistakes $M_A(Y)$ the algorithm makes over $G(Y)$ when the worst possible presentation order of the edge labels in $Y$ is selected by the adversary.
Also in the online setting our label regularity measure, denoted here by $\Psi_G(Y)$, is small when nodes in $G$ tend to be either troll or trustworthy. 
Formally, for fixed $G$ and $Y$, let 
$
    \Psiin(j,Y) = \min\big\{\din^-(j),\din^+(j)\big\}
$ and
$
    \Psiout(i,Y) = \min\big\{\dout^-(i),\dout^+(i)\big\}
$.
Let also
$
    \Psiin(Y) = \sum_{j \in V} \Psiin(j,Y)
$ and
$
\Psiout(Y) = \sum_{i \in V} \Psiout(i,Y)
$.
Then we define $\Psi_G(Y) = \min\big\{\Psiin(Y),\Psiout(Y)\big\}$. The two measures $\Psi^2_{G''}(Y)$ and $\Psi_G(Y)$ are conceptually related. 
Indeed, their value on real data is quite similar
(see Table~\ref{tab:all_mcc} in Section~\ref{s:exp}).


\vspace{-0.08in}
\section{Generative Model for Edge Labels}\label{s:gen}
\vspace{-0.1in}
We now define the stochastic generative model for edge labels we use in the batch learning setting. Given the graph $G = (V,E)$, let the label $y_{i,j} \in \spin$ of directed edge $(i,j) \in E$ be generated as follows. Each node $i \in V$ is endowed with two latent parameters $p_i, q_i \in [0,1]$, which we assume to be generated, for each node $i$, by an independent draw from a fixed but unknown joint prior distribution $\mu(p,q)$ over $[0,1]^2$. Each label $y_{i,j} \in \spin$ is then generated by an independent draw from the mixture of $p_i$ and $q_j$,
$
	\Pr\big( y_{i,j} = 1 \big) = \tfrac{p_i + q_j}{2}~.
$
The basic intuition is that the nature $y_{i,j}$ of a relationship $i\rightarrow j$ is stochastically determined by a mixture between how much node $i$ tends to like other people ($p_i$) and how much node $j$ tends to be liked by other people ($q_j$). In a certain sense, $1-tr(i)$ is the empirical counterpart to $p_i$, and $1-un(j)$ is the empirical counterpart to $q_j$.\footnote
{
One might view our model as reminiscent of standard models for link generation in social network analysis, like the classical $p_1$ model from \cite{hl81}. Yet, the similarity falls short, for all these models aim at representing the likelihood of the network topology, rather than the probability of edge signs, once the topology is {\em given}.
} 
Notice that the Bayes optimal prediction for $y_{i,j}$ is
$
	y^*(i,j) = \sgn\big(\eta(i,j) - \tfrac{1}{2}\big)~,
$
where $\eta(i,j) = \Pr\big( y_{i,j} = 1 \big)$. Moreover, the probability of drawing at random a $+1$-labeled edge from $\Nout(i)$ and the probability of drawing at random a $+1$-labeled edge from $\Nin(j)$ are respectively equal to
\vspace{-0.12in}

\begin{small}
\begin{equation}\label{e:pout}
\frac{1}{2}\,\Biggl(p_i + \frac{1}{\dout(i)}\!\!\sum_{j\in \NNout(i)} \!\!\! q_j \Biggl) \,\,\,\,\text{and}\,\,\,\,
%
\frac{1}{2}\,\Biggl(q_j + \frac{1}{\din(j)}\!\!\sum_{i\in \NNin(j)} \!\!\! p_i \Biggl)~.
\end{equation}
\end{small}

\vspace{-0.25in}
\section{Algorithms in the Batch Setting}\label{s:algbatch}
\vspace{-0.08in}
Given $G(Y) =(V,E(Y))$, we have at our disposal a training set $E_0$ of labeled edges from $E(Y)$, our goal being that of building a predictive model for the labels of the remaining edges.

Our first algorithm is an approximation to the Bayes optimal predictor $y^*(i,j)$. Let us denote by $\htr(i)$ and $\hun(i)$ the trollness and the untrustworthiness of node $i$ when both are computed on the subgraph induced by the training edges. We now design and analyze an edge classifier of the form
\begin{equation}
\label{eq:predictor}
	\sgn\Big(\big(1-\htr(i)\big) + \big(1-\hun(j)\big) - \tfrac{1}{2} -\tau\Big)~,
\end{equation}
where $\tau \ge 0$ is the only parameter to be trained. Despite its simplicity, this classifier works reasonably well in practice, as demonstrated by our experiments (see Section~\ref{s:exp}). Moreover, unlike previous edge sign prediction methods for directed graphs, our classifier comes with a rigorous theoretical motivation, since it approximates the Bayes optimal classifier $y^*(i,j)$ with respect to the generative model defined in Section~\ref{s:gen}. It is important to point out that when we use $1-\htr(i)$ and $1-\hun(j)$ to estimate $p_i$ and $q_j$, an additive bias shows up due to~(\ref{e:pout}). This motivates the need of a threshold parameter $\tau$ to cancel this bias. Yet, the presence of a prior distribution $\mu(p,q)$ ensures that this bias is the same for all edges $(i,j) \in E$.

Our algorithm works under the assumption that for given parameters $Q$ (a positive integer) and $\alpha \in (0,1)$ there exists a set\footnote
{
$E_L$ is needed to find an estimate $\tauhat$ of $\tau$ in~(\ref{eq:predictor}) ---see Step~3 of the algorithm. Any undirected matching of $G$ of size $\mathcal{O}(\log|V|)$ can be used. In practice, however, we never computed $E_L$, and estimated $\tau$ on the entire training set $E_0$.
}
$E_L \ss E$ of size $\tfrac{2Q}{\alpha}$ where each vertex $i \in V$ appearing as an endpoint of some edge in $E_L$ occurs at most once as origin ---i.e., $(i,j)$--- and at most once as destination ---i.e., $(j,i)$. Moreover, we assume $E_0$ has been drawn from $E$ at random {\em without} replacement, with $m = |E_0| = \alpha\,|E|$. The algorithm performs the following steps:


\begin{enumerate}[leftmargin=*,nosep,label=\textbf{\arabic*.}]
	\item For each $j \in V$, let
		$
		\hun(j) = \hdin^-(j)/\hdin(j)
		$,
		i.e., the fraction of negative edges found in $\Nin(j) \cap E_0$.

	\item For each $i \in V$, let
		$
		\htr(i) = \hdout^-(i)/\hdout(i)
		$,
		i.e., the fraction of negative edges found in $\Nout(i) \cap E_0$.

	\item Let $\tauhat$ be the fraction of positive edges in $E_L\cap E_0$.

	\item Any remaining edge $(i,j) \in E\setminus E_0$ is predicted as
		$
		\yhat(i,j) = \sgn\Big(\big(1-\htr(i)\big) + \big(1-\hun(j)\big) - \tfrac{1}{2} -\tauhat\Big)
		$.
\end{enumerate}

The next result\footnote
{
All proofs are in the supplementary material.
} 
shows that if the graph is not too sparse, then the above algorithm can approximate the Bayes optimal predictor on nodes whose in-degree and out-degree is not too small.
%
%

\vspace{-0.2cm}
\begin{theorem}
\label{t:active}
Let $G(Y) = (V,E(Y))$ be a directed graph with labels on the edges generated according to the model in Section~\ref{s:gen}.
If the algorithm is run with parameter $Q = \Omega(\ln|V|)$, and $\alpha \in (0,1)$ such that the above assumptions are satisfied, then $\yhat(i,j) = y^*(i,j)$ holds with high probability simultaneously for all test edges $(i,j) \in E$ such that $\dout(i),\din(j) = \Omega(\ln|V|)$, and $\eta(i,j) = \Pr(y_{i,j}=1)$ is bounded away from $\tfrac{1}{2}$. 
\end{theorem}
%


The approach leading to Theorem~\ref{t:active} lets us derive the \usrule{}
algorithm assessed in our experiments of Section~\ref{s:exp}, but it needs the graph to be
sufficiently dense and the bias $\tau$ to be the same for all edges. In order to
address these limitations, we now introduce a second method based on label
propagation.



\vspace{-0.15in}
\subsection{Approximation to Maximum Likelihood via Label Propagation}\label{ss:passive}

\vspace{-0.07in}
For simplicity, assume the joint prior distribution $\mu(p,q)$ is uniform over $[0,1]^2$ with independent marginals, and suppose that we draw at random without replacement the training set $E_0 = \big((i_1,j_1),y_{i_1,j_1}), ((i_2,j_2),y_{i_2,j_2}), \ldots, ((i_m,j_m),y_{i_m,j_m}\big)$, with $m = |E_0|$. Then a reasonable approach to approximate $y^*(i,j)$ would be to resort to a maximum likelihood estimator of the parameters $\{p_i, q_i\}_{i=1}^{|V|}$ based on $E_0$.
As showed in the supplementary material,
the gradient of the log-likelihood function w.r.t.\ $\{p_i, q_i\}_{i=1}^{|V|}$ satisfies
\vspace{-0.05in}
\begin{small}
\begin{align}
&\frac{\partial \log \Pr\left(E_0 \,\Big|\, \{p_i, q_i\}_{i=1}^{|V|}\right)}{\partial p_{\ell}}\label{e:mlp}\\
&\ \ = 
\sum_{k=1}^m
\frac{\Ind{i_k = \ell,y_{\ell,j_k}=+1}}{p_{\ell}+q_{j_k}}\, 
- \sum_{k=1}^m
\frac{\Ind{i_k = \ell,y_{\ell,j_k}=-1}}{2-p_{\ell}-q_{j_k}}\,,\notag
\end{align}
\vspace{-0.17in}
\begin{align}
&\frac{\partial \log \Pr\left(E_0 \,\Big|\, \{p_i, q_i\}_{i=1}^{|V|}\right)}{\partial q_{\ell}}\label{e:mlq}\\
&\ \ = 
\sum_{k=1}^m
\frac{\Ind{j_k = \ell,y_{i_k,\ell}=+1}}{p_{i_k}+q_{\ell}}\, 
- \sum_{k=1}^m
\frac{\Ind{j_k = \ell,y_{i_k,\ell}=-1}}{2-p_{i_k}-q_{\ell}}\,,\notag
\end{align}
\end{small}
where $\Ind{\cdot}$ is the indicator function of the event at argument.
Unfortunately, equating~(\ref{e:mlp}) and~(\ref{e:mlq}) to zero, and solving for parameters $\{p_i, q_i\}_{i=1}^{|V|}$ gives rise to a hard set of nonlinear equations.
Moreover, some such parameters may never occur in these equations, namely whenever $\Nout(i)$ or $\Nin(j)$ are not represented in $E_0$ for some $i,j\in V$. 
%
Our \emph{first approximation} is therefore to replace the nonlinear equations resulting from~(\ref{e:mlp}) and~(\ref{e:mlq}) by the following set of linear equations\footnote{Details are provided in the supplementary material.}, one for each $\ell \in V$:
\vspace{-0.08in}
\begin{align*}
\sum_{k=1}^m &\Ind{i_k = \ell,y_{\ell,j_k}=+1} \left(2-p_{\ell}-q_{j_k}\right)\\
=
&\sum_{k=1}^m \Ind{i_k = \ell,y_{\ell,j_k}=-1}
(p_{\ell}+q_{j_k})
\end{align*}
%
%
\vspace{-0.2in}
\begin{align*}
\sum_{k=1}^m &\Ind{j_k = \ell,y_{i_k,\ell}=+1} \left(2-p_{i_k}-q_{\ell}\right)\\
=
&\sum_{k=1}^m  \Ind{j_k = \ell,y_{i_k,\ell}=-1}
\left(p_{i_k}+q_{\ell}\right)~.
\end{align*}
%
%
%
The solution to these equations are precisely the points where the gradient w.r.t.\ $(\bp,\bq) =\{p_i, q_i\}_{i=1}^{|V|}$ of the quadratic function
\vspace{-0.1in}
\[
f_{E_0}(\bp,\bq) = \sum_{(i,j) \in E_0} \left(\frac{1+y_{i,j}}{2} - \frac{p_i+q_j}{2} \right)^2
\]
vanishes.
We follow a label propagation approach by adding to $f_{E_0}$ the corresponding test set function $f_{E\setminus E_0}$, and treat the sum of the two as the function to be minimized during training w.r.t.\ both $(\bp,\bq)$ and all $y_{i,j} \in [-1,+1]$ for $(i,j)\in E\setminus E_0$, i.e.,
\begin{equation}\label{e:quadratic}
\min_{(\bp,\bq), y_{i,j} \in [-1,+1],\,(i,j)\in E\setminus E_0} 
\left(f_{E_0}(\bp,\bq) + f_{E\setminus E_0}(\bp,\bq)\right)\,.
\end{equation}
Binary $\pm 1$ predictions on the test set $E\setminus E_0$ are then obtained by thresholding the obtained values $y_{i,j}$ at $0$.

We now proceed to solve \eqref{e:quadratic} via label propagation~\cite{ZGL03} on the graph $G''$ obtained through the $G \rightarrow G''$ reduction of Section~\ref{s:prel}.
However, because of the presence of negative edge weights in $G''$, we first have to symmetrize\footnote{%
While we note here that such linear transformation of the variables does not change the problem, we provide more details in Section~1.3 of the supplementary material.} variables $p_i, q_i, y_{i,j}$ so as they all lie in the interval $[-1,+1]$.
After this step, one can 
see that, once we get back to the original variables, label propagation computes the harmonic solution minimizing the function
%
\begin{align*}
{\widehat f}&\big(\bp,\bq,{y_{i,j}}_{(i,j) \in E\setminus E_0}\big)
= f_{E_0}(\bp,\bq) + f_{E\setminus E_0}(\bp,\bq)\\ 
& \qquad + \frac{1}{2}\sum_{i\in V} \left(\dout(i)\Bigl(p_i-\frac{1}{2}\Bigl)^2+\din(i)\Bigl(q_i-\frac{1}{2}\Bigl)^2\right)\,.
\end{align*}
%
The function ${\widehat f}$ is thus a regularized version of the target function $f_{E_0} + f_{E\setminus E_0}$ in (\ref{e:quadratic}), where the regularization term 
tries to enforce the extra constraint that whenever a node $i$ has a high out-degree then the corresponding $p_i$ should be close to $1/2$. Thus, on any edge $(i,j)$ departing from $i$, the Bayes optimal predictor $y^*(i,j) = \sgn(p_i+q_j-1)$ will mainly depend on $q_j$ being larger or smaller than $\tfrac{1}{2}$ (assuming $j$ has small in-degree). Similarly, if $i$ has a high in-degree, then the corresponding $q_i$ should be close to $1/2$ implying that on any edge $(j,i)$ arriving at $i$ the Bayes optimal predictor $y^*(j,i)$ will mainly depend on $p_j$ (assuming $j$ has small out-degree). Put differently, a node having a huge out-neighborhood makes each outgoing edge ``count less" than a node having only a small number of outgoing edges, and similarly for in-neighborhoods.
%
%
%
%
The label propagation algorithm operating on $G''$ does so (see again Figure \ref{f:etnr} (c)) by iteratively updating as follows:
\vspace{-0.04in}
\begin{align*}
	p_{i}  & \leftarrow \frac{-\sum_{j \in \NNout(i)} q_{j} + \sum_{j \in \NNout(i)} (1+y_{i,j})}{3\,\dout(i)}\,\quad\forall i\in V\\ 
	q_{j}  & \leftarrow \frac{-\sum_{i \in  \NNin(j)} p_{i} + \sum_{i \in \NNin(j) } (1+y_{i,j})}{3\,\din(j)}\qquad\forall j \in V\\
  y_{i,j} & \leftarrow \frac{p_i + q_j}{2}~ \quad \forall (i,j) \in E\setminus E_0~.
\end{align*}
%
The algorithm is guaranteed to converge~\cite{ZGL03} to the minimizer of ${\widehat f}$. Notice that the presence of negative weights on the edges of $G''$ does not prevent label propagation from converging. 
This is the algorithm we will be championing in our experiments of Section~\ref{s:exp}.

%


\label{ssec:further_related}
{\bf Further related work.} The vast majority of existing edge sign prediction algorithms for directed graphs are based on the computation of local features of the graph. These features are evaluated on the subgraph induced by the training edges, and the resulting values are used to train a supervised classification algorithm (e.g., logistic regression). The most basic set of local features used to classify a given edge $(i,j)$ are defined by 
$\din^+(j),\din^-(j),\dout^+(i),\dout^-(i)$ computed over the training set $E_0$, and by the embeddedness coefficient $\big|\Nout(i) \cap \Nin(j)\big|$. In turn, these can be used to define more complicated features, such as
\(
	\frac{\din^+(j) + |E^+|\uin(j)}{\din(j) + \uin(j)}
\quad\text{and}\quad
	\frac{\dout^+(i) + p^+\uout(i)}{\dout(i) + \uout(i)}
\)
introduced in~\cite{Bayesian15}, together with their negative counterparts, where $|E^+|$ is the overall fraction of positive edges, and $\uin(j),\uout(i)$ are, respectively, the number of test edges outgoing from $i$ and the number of test edges incoming to $j$.
Other types of features are derived from social status theory~(e.g., \cite{Leskovec2010}), and involve the so-called triads; namely, the triangles formed by $(i,j)$ together with $(i,w)$ and $(w,j)$ for any $w \in \NNout(i) \cap \NNin(j)$. A third group of features is based on node ranking scores. These scores are computed using a variery of methods, including Prestige~\cite{zolfaghar2010mining}, exponential ranking~\cite{traag2010exponential}, PageTrust~\cite{de2008pagetrust}, Bias and Deserve~\cite{mishra2011finding}, TrollTrust~\cite{wu2016troll}, and generalizations of PageRank and HITS to signed networks~\cite{shahriari2014ranking}. Examples of features using such scores are \textsl{reputation} and \textsl{optimism}~\cite{shahriari2014ranking}, defined for a node $i$ by
\(
	\frac{\sum_{j \in \NNin(i)} y_{j,i}\sigma(j)}{\sum_{j \in \NNin(i)} \sigma(j)}
\quad\text{and}\quad
	\frac{\sum_{j \in \NNout(i)} Y_{i,j}\sigma(j)}{\sum_{j \in \NNout(i)} \sigma(j)}\,,
\)
where $\sigma(j)$ is the ranking score assigned to node $j$. Some of these algorithms will be used as representative competitors in our experimental study of Section~\ref{s:exp}.

\vspace{-0.09in}
\section{Algorithms in the Online Setting}\label{s:algonline}

\vspace{-0.12in}
For the online scenario, we have the following result.
%
\begin{theorem}\label{t:online}
There exists a randomized online prediction algorithm $A$ whose expected number of mistakes satisfies
$
\E M_A(Y) = \Psi_G(Y) + O\left(\sqrt{|V|\Psi_G(Y)} + |V| \right)
$
on any edge-labeled graph $G(Y) = (V,E(Y))$.
\end{theorem}
%
The algorithm used in Theorem~\ref{t:online} is a combination of randomized Weighted Majority instances. Details are reported in the supplementary material. We complement the above result by providing a mistake lower bound. Like Theorem~\ref{t:online}, the following result holds for all graphs, and for all label irregularity levels $\Psi_G(Y)$.
%
\begin{theorem}
\label{t:mistake_bound}
Given any edge-labeled graph $G(Y) = (V,E(Y))$ and any integer $K \le \big\lfloor \tfrac{|E|}{2}\big\rfloor$, a randomized labeling $Y\in\spin^{|E|}$ exists such that $\Psi_G(Y) \leq K$, and the expected number of mistakes that any online algorithm $A$ can be forced to make satisfies
\(
\E M_A(Y) \ge \frac{K}{2}\,.
\)
Moreover, as $\frac{K}{|E|} \rightarrow 0$ then
\(
\E M_A(Y) = K
\).
\end{theorem}
%


\vspace{-0.1in}
\section{Experimental Analysis}\label{s:exp}
\vspace{-0.1in}
We now evaluate our edge sign classification methods on representative real-world datasets of varying density and label regularity,
showing that our methods compete well against existing approaches in terms of both predictive and computational performance. We are especially interested in small training set regimes, and have restricted our comparison to the batch learning scenario since all competing methods we are aware of have been developed in that setting only.

{\bf Datasets.} We considered five real-world classification datasets. The first three are directed \ssn{} widely used as benchmarks for this task~(e.g.,\cite{Leskovec2010,shahriari2014ranking,wu2016troll}): In \wik{}, there is an edge from user $i$ to user $j$ if $j$ applies for an admin position and $i$ votes for or against that promotion. In \sla{}, a news sharing and commenting website, member $i$ can tag other members $j$ as friends or foes. Finally, in \epi{}, an online shopping website, user $j$ reviews products and, based on these reviews, another user $i$ can display whether he considers $j$ to be reliable or not. In addition to these three datasets, we considered two other \ssn{} where the signs are inferred automatically.
In \kiw{}~\cite{wikiedits11}, an edge from Wikipedia user $i$ to user $j$ indicates whether they edited the same article in a constructive manner or not.\footnote
{
This is the \href{http://konect.uni-koblenz.de/networks/wikisigned-k2}{KONECT version of
the \enquote{Wikisigned} dataset}, from which we removed self-loops.
}
Finally, in the \aut{}~\cite{kumar2016structure} network, an author $i$ cites another author $j$ by either endorsing or criticizing $j$'s work. The edge sign is derived by classifying the citation sentiment with a simple, yet powerful, keyword-based technique using a list of positive and negative words. See \cite{kumar2016structure} for more details.\footnote{
We again removed self-loops and merged multi-edges which are all of the same sign.
}

\begin{table*}[bt]
  \centering
  \small
  \caption{Dataset properties. The 5th column gives the fraction of positive labels. The last two columns provide two different measures of label regularity ---see main text.\label{tab:dataset}}
\vspace{-0.07in}
  \begin{tabular}{lrrrrrrrr}
    \toprule
    Dataset &       $|V|$ &       $|E|$ &$\frac{|E|}{|V|}$ & $\frac{|E^+|}{|E|}$ & $\frac{\Psi^2_{G''}(Y)}{|E|}$ & $\frac{\Psi_G(Y)}{|E|}$ \\ 
    \midrule
    \aut{} &   \np{4831} &  \np{39452} & 8.1  &              72.33\% &                    .076 &                       .191 \\ 
    \wik{} &   \np{7114} & \np{103108} & 14.5 &              78.79\% &                    .063 &                       .142 \\ 
    \sla{} &  \np{82140} & \np{549202} & 6.7  &              77.40\% &                    .059 &                       .143 \\ 
    \kiw{} & \np{138587} & \np{740106} & 5.3  &              87.89\% &                    .034 &                       .086 \\ 
    \epi{} & \np{131580} & \np{840799} & 6.4  &              85.29\% &                    .031 &                       .074 \\ 
    \bottomrule
  \end{tabular}
\vspace{-0.15in}
\end{table*}

\autoref{tab:dataset} summarizes statistics for these datasets. We note that most edge labels are positive. Hence, test set accuracy is not an appropriate measure of prediction performance. We instead evaluated our performance using the so-called Matthews Correlation Coefficient (MCC)~(e.g., \cite{mcc00}), defined as
\vspace{-0.02in}
\[
	\mathrm{MCC} = \frac{tp\times tn-fp\times fn} {\sqrt{ (tp + fp) ( tp + fn ) ( tn + fp ) ( tn + fn ) } }\,.
\]
\vspace{-0.02in}
MCC combines all the four quantities found in a binary confusion matrix ($t$rue $p$ositive, $t$rue $n$egative, $f$alse $p$ositive and $f$alse $n$egative) into a single metric which ranges from $-1$ (when all predictions are incorrect) to $+1$ (when all predictions are correct).

Although the semantics of the edge signs is not the same across these networks, we can see from \autoref{tab:dataset} that our generative model essentially fits all of them. Specifically, the last two columns of the table report the rate of label (ir)regularity, as measured by $\Psi^2_{G''}(Y)/|E|$ (second-last column) and $\Psi_{G}(Y)/|E|$ (last column), where 
\vspace{-0.1in}
\[
\Psi^2_{G''}(Y) = \min_{(\bp,\bq)} \left(f_{E_0}(\bp,\bq) + f_{E\setminus E_0}(\bp,\bq)\right)\,,
\]
%
$f_{E_0}$ and $f_{E\setminus E_0}$ being the quadratic criterions of Section~\ref{ss:passive}, viewed as functions of both $(\bp,\bq)$, and $y_{i,j}$,
and $\Psi_{G}(Y)$ is the label regularity measure adopted in the online setting, as defined in Section~\ref{s:prel}. 
It is reasonable to expect that higher label irregularity corresponds to lower prediction performance. This trend is in fact confirmed by our experimental findings: whereas \epi{} tends to be easy, \aut{} tends to be hard, and this holds for all algorithms we tested, even if they do not explicitly comply with our inductive bias principles. Moreover, $\Psi^2_{G''}(Y)/|E|$ tends to be proportional to $\Psi_{G}(Y)/|E|$ across datasets, hence confirming the anticipated connection between the two regularity measures.

\vspace{-0.05in}
{\bf Algorithms and parameter tuning.} 
We compared the following algorithms:
%
%

\vspace{-0.07in}
{\bf 1.} The label propagation algorithm of Section~\ref{ss:passive} (referred to as \uslpropGsec{}). The actual binarizing threshold was set by cross-validation on the training set.
%

\vspace{-0.07in}
{\bf 2.} The algorithm analyzed at the beginning of Section~\ref{s:algbatch}, which we call \usrule{} (Bayes Learning Classifier based on {\em tr}ollness and {\em un}trustworthiness). After computing $\htr(i)$ and $\hun(i)$ on training set $E_0$ for all $i \in V$ (or setting those values to $\frac{1}{2}$ in case there is no outgoing or incoming edges for some node), we use Eq.~\eqref{eq:predictor} and estimate $\tau$ on $E_0$.
%

\vspace{-0.07in}
{\bf 3.} A logistic regression model where each edge $(i,j)$ is associated with the features $[1-\htr(i), 1-\hun(j)]$ computed again on $E_0$ (we call this method \uslogregp{}). Best binary thresholding is again computed on $E_0$. Experimenting with this logistic model serves to support the claim we made in the introduction that our generative model in Section~\ref{s:gen} is a good fit for the data.

\vspace{-0.05in}
{\bf 4.} The solution obtained by directly solving the unregularized problem (\ref{e:quadratic}) through a fast constrained minimization algorithm (referred to as \qoptim{}). Again, the actual binarizing threshold was set by cross-validation on the training set.\footnote
{
We have also tried to minimize (\ref{e:quadratic}) by removing the $[-1,+1]$ constraints, but got similar MCC results
as the ones we report for \qoptim{}
}

\vspace{-0.07in}
{\bf 5.} The matrix completion method from~\cite{LowRankCompletion14} based on \complowrank{} matrix factorization. Since the authors showed their method to be robust to the choice of the rank parameter $k$, we picked $k=7$ in our experiments.
%

\vspace{-0.07in}
{\bf 6.} A logistic regression model built on \comptriads{} features derived from status theory~\cite{Leskovec2010}.
%

\vspace{-0.07in}
{\bf 7.} The PageRank-inspired algorithm from \cite{wu2016troll}, where a recursive notion of trollness is computed by solving a suitable set of nonlinear equations through an iterative method, and then used to assign ranking scores to nodes, from which (un)trustworthiness features are finally extracted for each edge. We call this method \compranknodes{}. As for hyperparameter tuning ($\beta$ and $\lambda_1$ in~\cite{wu2016troll}), we closely followed the authors' suggestion of doing cross validation.
%

\vspace{-0.07in}
{\bf 8.} The last competitor is the logistic regression model whose features have been build according to \cite{Bayesian15}. We call this method \compbayesian{}.

The above methods can be roughly divided into {\em local} and {\em global} methods. A local method hinges on building local predictive features, based on neighborhoods: \usrule{}, \uslogregp{}, \comptriads{}, and \compbayesian{} essentially fall into this category. The remaining methods 
are global in that their features are designed to depend on global properties of the graph topology.

\begin{table*}[t]
  \centering
\setlength{\tabcolsep}{3pt}
\scriptsize
\caption{MCC with increasing training set size, with one standard deviation over 12 random sampling of $\trainset$. The last four columns refer to the methods we took from the literature. For the sake of readability, we multiplied all MCC values by 100. The best number in each row is highlighted in \textbf{\textcolor{brown}{bold brown}} and the second one in \textit{\textcolor{red}{italic red}}. If the difference is statistically significant ($p$-value of a paired Student's $t$-test less than $0.005$), the best score is underlined. The ``time" rows contain the time taken to train on a $15\%$ training set.\label{tab:all_mcc}}
\vspace{-0.1in}
\begin{tabular}{lrcccc|cccc}
\toprule
                                                  & $\frac{|\trainset{}|}{|E|}$ &                 \uslpropGsec{} &                       \usrule{} &                    \uslogregp{} &         \qoptim{} &     \complowrank{} &      \comptriads{} &   \compranknodes{} &              \compbayesian{} \\
\midrule
\multirow{7}{*}{\rotatebox[origin=c]{90}{\aut{}}} & $5\%$                       & $\vfirstSig{24.54} \pm 0.69$  & $\vsecondSig{20.21} \pm 0.66$ & $20.19 \pm 0.71$              & $15.86 \pm 0.81$ & $12.76 \pm 0.65$ & $11.04 \pm 0.81$ & $17.18 \pm 1.11$             & $15.28 \pm 1.31$ \\
                                                  & $10\%$                      & $\vfirstSig{31.20} \pm 0.58$  & $\vsecondSig{27.54} \pm 0.56$ & $27.49 \pm 0.62$              & $25.36 \pm 0.78$ & $17.81 \pm 0.76$ & $16.99 \pm 0.63$ & $25.36 \pm 0.85$             & $24.74 \pm 0.59$ \\
                                                  & $15\%$ &  $\vfirstSig{35.66} \pm 0.68$  &  $\vsecondSig{32.87} \pm 0.58$  &               $32.79 \pm 0.60$  &  $31.39 \pm 0.75$  &  $22.58 \pm 0.53$  &  $21.55 \pm 0.91$  &  $30.60 \pm 0.87$  &            $31.71 \pm 0.99$  \\
                                                  & $20\%$ &  $\vfirstSig{38.67} \pm 0.48$  &  $\vsecondSig{36.94} \pm 0.51$  &               $36.86 \pm 0.48$  &  $35.47 \pm 0.41$  &  $25.80 \pm 0.94$  &  $24.27 \pm 0.56$  &  $35.01 \pm 0.83$  &            $36.13 \pm 0.75$  \\
                                                  & $25\%$ &     $\vfirst{41.05} \pm 0.73$  &               $39.83 \pm 0.58$  &               $39.76 \pm 0.59$  &  $38.48 \pm 0.55$  &  $29.67 \pm 0.78$  &  $26.85 \pm 0.87$  &  $38.06 \pm 0.86$  &  $\vsecond{40.34} \pm 0.94$  \\
                                                  & time   &                           19.6 &                             0.6 &                             2.6 &               2835 &               3279 &                6.2 &                155 &                         4813 \\
\midrule
\multirow{7}{*}{\rotatebox[origin=c]{90}{\wik{}}} & $5\%$                       & $\vfirstSig{39.46} \pm 0.79$  & $38.03 \pm 0.97$              & $\vsecondSig{38.50} \pm 0.87$ & $35.72 \pm 0.70$ & $24.58 \pm 1.18$ & $9.59 \pm 1.10$  & $33.60 \pm 0.64$             & $26.45 \pm 0.57$ \\
                                                  & $10\%$                      & $\vsecond{47.17} \pm 0.35$    & $46.03 \pm 0.49$              & $\vfirst{47.22} \pm 0.40$     & $44.53 \pm 0.48$ & $31.72 \pm 0.61$ & $26.36 \pm 0.83$ & $43.21 \pm 0.81$             & $40.28 \pm 0.69$ \\
                                                  & $15\%$ &     $\vsecond{50.49} \pm 0.33$  &  $49.89 \pm 0.40$  &      $\vfirst{50.87} \pm 0.36$  &  $49.08 \pm 0.33$  &  $35.77 \pm 0.58$  &  $33.64 \pm 0.83$  &  $48.50 \pm 0.47$  &  $47.07 \pm 0.38$  \\
                                                  & $20\%$ &  $\vsecondSig{52.74} \pm 0.31$  &  $52.24 \pm 0.49$  &   $\vfirstSig{53.13} \pm 0.27$  &  $51.79 \pm 0.35$  &  $37.90 \pm 0.27$  &  $38.41 \pm 0.53$  &  $51.49 \pm 0.43$  &  $50.54 \pm 0.39$  \\
                                                  & $25\%$ &     $\vsecond{54.00} \pm 0.63$  &  $53.42 \pm 0.59$  &      $\vfirst{54.26} \pm 0.37$  &  $53.31 \pm 0.37$  &  $40.16 \pm 0.57$  &  $41.34 \pm 1.07$  &  $53.30 \pm 0.37$  &  $52.92 \pm 0.48$  \\
                                                  & time   &                            41.9 &                1.6 &                             6.0 &              10629 &               8523 &               14.8 &                249 &              12507 \\
\midrule
\multirow{7}{*}{\rotatebox[origin=c]{90}{\sla{}}} & $5\%$                       & $\vsecondSig{40.77} \pm 0.20$ & $36.13 \pm 0.57$              & $37.00 \pm 0.29$              & $33.49 \pm 0.32$ & $36.83 \pm 0.47$ & $27.10 \pm 0.75$ & $\vfirstSig{45.16} \pm 0.59$ & $29.25 \pm 0.23$ \\
                                                  & $10\%$                      & $\vsecondSig{46.61} \pm 0.29$ & $41.89 \pm 0.39$              & $43.15 \pm 0.21$              & $40.92 \pm 0.23$ & $39.57 \pm 0.27$ & $40.38 \pm 1.47$ & $\vfirstSig{47.84} \pm 0.50$ & $38.25 \pm 0.21$ \\
                                                  & $15\%$ &   $\vfirstSig{49.62} \pm 0.22$  &  $45.42 \pm 0.36$  &  $46.42 \pm 0.16$  &  $45.56 \pm 0.19$  &  $41.21 \pm 0.19$  &  $45.88 \pm 1.01$  &  $\vsecondSig{48.75} \pm 0.71$  &  $43.47 \pm 0.16$  \\
                                                  & $20\%$ &     $\vsecond{51.88} \pm 0.24$  &  $47.78 \pm 0.25$  &  $48.66 \pm 0.10$  &  $48.10 \pm 0.30$  &  $42.74 \pm 0.44$  &  $48.79 \pm 0.57$  &      $\vfirst{52.10} \pm 0.33$  &  $46.89 \pm 0.27$  \\
                                                  & $25\%$ &     $\vsecond{53.12} \pm 0.20$  &  $49.39 \pm 0.24$  &  $50.22 \pm 0.12$  &  $50.11 \pm 0.20$  &  $44.24 \pm 0.44$  &  $50.62 \pm 0.53$  &      $\vfirst{53.29} \pm 0.22$  &  $49.42 \pm 0.22$  \\
                                                  & time   &                             677 &                8.3 &               32.8 &              78537 &              69988 &                131 &                            2441 &              68085 \\
\midrule
\multirow{7}{*}{\rotatebox[origin=c]{90}{\epi{}}} & $5\%$                       & $\vsecondSig{54.83} \pm 0.16$ & $46.94 \pm 0.80$              & $49.16 \pm 0.32$              & $42.79 \pm 0.34$ & $39.96 \pm 0.60$ & $42.94 \pm 2.06$ & $\vfirstSig{56.04} \pm 0.76$ & $37.99 \pm 0.49$ \\
                                                  & $10\%$                      & $\vsecondSig{58.94} \pm 0.27$ & $54.03 \pm 0.46$              & $55.90 \pm 0.13$              & $53.43 \pm 0.39$ & $44.50 \pm 0.52$ & $50.29 \pm 1.07$ & $\vfirstSig{60.60} \pm 0.32$ & $49.90 \pm 0.36$ \\
                                                  & $15\%$ &  $\vsecondSig{61.47} \pm 0.21$  &  $57.63 \pm 0.45$  &  $59.25 \pm 0.17$  &  $58.80 \pm 0.32$  &  $48.24 \pm 0.58$  &  $54.64 \pm 1.62$  &  $\vfirstSig{62.69} \pm 0.21$  &            $56.94 \pm 0.65$  \\
                                                  & $20\%$ &  $\vsecondSig{63.17} \pm 0.13$  &  $60.15 \pm 0.40$  &  $61.45 \pm 0.17$  &  $61.86 \pm 0.13$  &  $52.21 \pm 0.37$  &  $57.27 \pm 1.42$  &  $\vfirstSig{64.10} \pm 0.12$  &            $61.18 \pm 0.45$  \\
                                                  & $25\%$ &               $64.05 \pm 0.20$  &  $61.88 \pm 0.38$  &  $62.89 \pm 0.12$  &  $63.42 \pm 0.14$  &  $54.68 \pm 0.62$  &  $58.42 \pm 1.59$  &     $\vfirst{65.40} \pm 0.85$  &  $\vsecond{64.59} \pm 0.30$  \\
                                                  & time   &                            1329 &               10.1 &               54.0 &             143881 &             127654 &                209 &                           3174 &                       104305 \\
\midrule
\multirow{7}{*}{\rotatebox[origin=c]{90}{\kiw{}}} & $5\%$                       & $\vfirstSig{36.36} \pm 0.53$  & $\vsecondSig{30.89} \pm 0.28$ & $30.81 \pm 0.20$              & $21.69 \pm 0.25$ & $23.15 \pm 0.26$ & $3.04 \pm 0.46$  & $26.63 \pm 0.44$             & $26.68 \pm 0.34$ \\
                                                  & $10\%$                      & $\vfirstSig{38.58} \pm 0.74$  & $35.68 \pm 0.22$              & $\vsecondSig{35.93} \pm 0.16$ & $29.75 \pm 0.21$ & $27.07 \pm 0.44$ & $12.34 \pm 0.79$ & $33.85 \pm 0.33$             & $35.00 \pm 0.34$ \\
                                                  & $15\%$ &  $\vsecondSig{39.08} \pm 0.55$  &               $37.77 \pm 0.22$  &               $38.27 \pm 0.19$  &  $33.61 \pm 0.11$  &  $30.05 \pm 0.29$  &  $17.95 \pm 0.92$  &  $36.88 \pm 0.32$  &  $\vfirstSig{40.00} \pm 0.26$  \\
                                                  & $20\%$ &               $39.04 \pm 0.69$  &               $38.88 \pm 0.36$  &  $\vsecondSig{39.55} \pm 0.11$  &  $35.04 \pm 0.17$  &  $32.17 \pm 0.31$  &  $21.44 \pm 0.67$  &  $38.60 \pm 0.31$  &  $\vfirstSig{43.32} \pm 0.22$  \\
                                                  & $25\%$ &               $38.90 \pm 0.45$  &               $39.41 \pm 0.16$  &  $\vsecondSig{40.44} \pm 0.14$  &  $36.18 \pm 0.20$  &  $33.94 \pm 0.74$  &  $23.41 \pm 0.41$  &  $39.75 \pm 0.32$  &  $\vfirstSig{45.76} \pm 0.29$  \\
                                                  & time   &                             927 &                             9.6 &                            46.8 &             219109 &             129460 &                177 &               3890 &                          92719 \\
\bottomrule
\end{tabular}
\vspace{-0.1in}
\end{table*}

{\bf Results.} Our main results are summarized in Table~\ref{tab:all_mcc}, reporting MCC test set performance after training on sets of varying size (from 5\% to 25\%). Results have been averaged over 12 repetitions. Because scalability is a major concern 
on sizeable datasets, we also give an idea of relative training times (in milliseconds) by reporting the time it took to train a single run of each algorithm on a training set of size\footnote
{
Comparison of training time performances is fair since all algorithms have been carefully implemented using the same stack of Python libraries, and run on the same machine (16 Xeon cores and 192Gb Ram).
} 
15\% of $|E|$, and then predict on the test set. Though our experiments are not conclusive, some trends can be 
spotted:
%
%

\vspace{-0.05in}
{\bf 1.}
Global methods tend to outperform local methods in terms of prediction performance, but are also significantly (or even much) slower (running times can differ by as much as three orders of magnitude). This is not surprising, and is in line with previous experimental findings (e.g., \cite{shahriari2014ranking,wu2016troll}). \compbayesian{} looks like an exception to this rule, but its running time is indeed in the same ballpark as global methods.

\vspace{-0.05in}
{\bf 2.} 
\uslpropGsec{} always ranks first or at least second in this comparison when MCC is considered. On top of it, \uslpropGsec{} is fastest among the global methods (one or even two orders of magnitude faster), thereby showing the benefit of our approach to edge sign prediction.

\vspace{-0.05in}
{\bf 3.} The regularized solution computed by \uslpropGsec{} is always better than the unregularized one computed by \qoptim{} in terms of both MCC and running time.

\vspace{-0.05in}
{\bf 4.} 
As claimed in the introduction, our Bayes approximator \usrule{} closely mirrors in performance the more involved \uslogregp{} model. In fact, supporting our generative model of Section~\ref{s:gen}, the logistic regression weights for features $1-\htr(i)$ and $1-\hun(j)$ are almost equal (see Table~2 in the supplementary material), thereby suggesting that predictor~\eqref{eq:predictor}, derived from the theoretical results at the beginning of Section~\ref{s:algbatch}, is {\em also} the best logistic model based on trollness and untrustworthiness.

\vspace{-0.13in}
\section{Conclusions and Ongoing Research}
\vspace{-0.15in}
%
We have studied the edge sign prediction problem in directed graphs in both batch and online learning settings. In both cases, the underlying modeling assumption hinges on the trollness and (un)trustworthiness predictive features. We have introduced a simple generative model for the edge labels 
to craft this problem as a node sign prediction problem to be efficiently tackled by standard Label Propagation algorithms. 
Furthermore, we have studied the problem in an (adversarial) online setting providing 
upper and (almost matching) lower bounds on the expected number of prediction mistakes.


\vspace{-0.07in}
Finally, we validated our theoretical results by experimentally assessing our methods on five real-world datasets in the small training set regime. 
Two interesting conclusions from our experiments are: i. Our generative model is robust, for it produces Bayes optimal predictors which tend to be empirically best also within the larger set of models that includes all logistic regressors based on trollness and trustworthiness alone; ii. 
our methods are in practice either strictly better than their competitors in terms of prediction quality or, when they are not, they are faster.
We are currently engaged in 
extending our approach so as to incorporate further predictive features (e.g., side information, when available).
\vspace{-0.1in}

\vspace{-0.05in}
\subsubsection*{Acknowledgements}
\vspace{-0.05in}
We would like to thank the reviewers for their comments which led improving the presentation of this paper.

\subsubsection*{References}

\bibliographystyle{plainnat}

\appendix

\section{Proofs from Section~\ref{s:algbatch}}
\subsection{Proof of Theorem~\ref{t:active}}
The following ancillary results will be useful.
\begin{lemma}[Hoeffding's inequality for sampling without replacement]
\label{l:hoeff-wr}
Let $\scX = \{x_1,\dots,x_N\}$ be a finite subset of $[0,1]$ and let
\[
	\mu = \frac{1}{N}\sum_{i=1}^N x_i~.
\]
If $X_1,\dots,X_n$ is a random sample drawn at random from $\scX$ without replacement, then, for every $\ve > 0$,
\[
	\Pr\left( \left|\frac{1}{n}\sum_{t=1}^n X_t - \mu \right| \ge \ve \right) \le 2e^{-2n\ve^2}~.
\]
\end{lemma}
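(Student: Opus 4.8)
The plan is to follow Hoeffding's original route: combine the exponential (Chernoff) bound with a reduction from sampling \emph{without} replacement to sampling \emph{with} replacement, after which the i.i.d.\ case is handled by the standard sub-Gaussian estimate. By symmetry it suffices to bound the one-sided probability $\Pr\big(\frac{1}{n}\sum_t X_t - \mu \ge \ve\big)$ and then double the resulting bound (applying the same argument to the variables $1 - X_t$, which lie in $[0,1]$ with mean $1-\mu$, controls the lower tail). Fixing $s > 0$ and applying Markov's inequality to $\exp\big(s\sum_t (X_t - \mu)\big)$ gives
\[
\Pr\left(\frac{1}{n}\sum_{t=1}^n X_t - \mu \ge \ve\right) \le e^{-s n \ve}\, \E\!\left[\exp\!\left(s\sum_{t=1}^n (X_t - \mu)\right)\right].
\]

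The crucial step is to control the moment generating function on the right. Let $Y_1,\dots,Y_n$ be drawn from $\scX$ \emph{with} replacement, i.e.\ i.i.d.\ uniform over the multiset $\{x_1,\dots,x_N\}$, each of mean $\mu$. Hoeffding's convex-domination lemma states that for every convex $f$ one has $\E\big[f(\sum_t X_t)\big] \le \E\big[f(\sum_t Y_t)\big]$; choosing $f(\cdot)=e^{s\,\cdot}$ yields $\E[\exp(s\sum_t X_t)] \le \E[\exp(s\sum_t Y_t)]$. Since the $Y_t$ are independent the latter MGF factorizes, and each factor is bounded by Hoeffding's lemma for a $[0,1]$-valued variable, namely $\E[e^{s(Y_t-\mu)}] \le e^{s^2/8}$. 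Hence
\[
\E\!\left[\exp\!\left(s\sum_{t=1}^n (X_t - \mu)\right)\right] \le \prod_{t=1}^n \E\big[e^{s(Y_t-\mu)}\big] \le e^{n s^2/8}~.
\]

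Substituting back gives $\Pr(\,\cdot \ge \ve\,) \le \exp\big(-s n \ve + n s^2/8\big)$, and minimizing the exponent over $s$ (the optimal choice being $s = 4\ve$) produces $e^{-2n\ve^2}$; doubling for the two-sided event completes the argument. I expect the only genuine obstacle to be the convex-domination (reduction) step, since this is precisely where the without-replacement dependence structure enters, whereas the remaining ingredients — the Chernoff bound, the factorization in the i.i.d.\ case, and the $[0,1]$ sub-Gaussian estimate — are entirely routine. To keep the reduction self-contained one would either reproduce Hoeffding's averaging argument over permutations of the population, or invoke the negative-association property of without-replacement samples (which delivers the same MGF domination directly); I would favor citing the convex-domination lemma and devoting the exposition to the clean Chernoff computation.
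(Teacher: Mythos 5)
Your proof is correct, but there is nothing in the paper to compare it against: the paper states Lemma~\ref{l:hoeff-wr} as an ancillary known result and never proves it (it is Hoeffding's 1963 inequality for sampling without replacement). Your argument is precisely Hoeffding's original one, and every step checks out: the reduction of the lower tail via $1-X_t$, the Chernoff bound, the convex-domination lemma $\E\big[f\big(\sum_t X_t\big)\big] \le \E\big[f\big(\sum_t Y_t\big)\big]$ reducing without-replacement to with-replacement sampling, the factorized MGF bound $e^{ns^2/8}$, and the optimization at $s=4\ve$ giving the exponent $-2n\ve^2$. As you note, the only non-routine ingredient is the domination step, which is standardly discharged by citing Hoeffding (1963, Theorem 4). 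Your alternative route via negative association is equally valid and in fact dovetails with the paper's own toolkit: the proof of Lemma~\ref{l:bins} already exploits negative association of without-replacement samples to apply a Chernoff bound, citing Dubhashi and Panconesi, so either citation style would sit naturally in this paper.
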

\begin{lemma}\label{l:bins}
Let $\NN_1, \ldots, \NN_n$ be subsets of a finite set $E$.
Let $E_0 \subseteq E$ be sampled uniformly at random {\em without} replacement from $E$, with $|E_0| = m$. Then, for $\delta \in (0,1)$, $Q > 0$, and $\theta \geq 2\times\max\left\{Q,4\ln \frac{n}{\delta}\right\}$, we have
\[
\Pr\Bigl( \exists i \,:\, |\NN_i| \geq \theta, |\NN_i \cap E_0| < Q\Bigl) \leq \delta
\]
provided $|E| \geq m \geq \frac{2|E|}{\theta}\times\max\left\{Q,4\ln \frac{n}{\delta}\right\}$.
\end{lemma}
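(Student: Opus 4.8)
The plan is to bound, for each fixed index $i$ with $|\NN_i| \geq \theta$, the probability that the random count $|\NN_i \cap E_0|$ drops below $Q$, and then take a union bound over the at most $n$ such indices, requiring each failure probability to be at most $\delta/n$. Fix such an $i$ and write $s_i = |\NN_i| \geq \theta$. Since $E_0$ is a uniform sample of size $m$ drawn without replacement, the count $|\NN_i \cap E_0|$ is hypergeometric with mean $\mu_i = \E\,|\NN_i \cap E_0| = m s_i / |E|$. The hypothesis $m \geq \frac{2|E|}{\theta}\max\{Q, 4\ln\frac{n}{\delta}\}$ together with $s_i \geq \theta$ yields the crucial lower bound $\mu_i \geq \frac{m\theta}{|E|} \geq 2\max\{Q, 4\ln\frac{n}{\delta}\}$; in particular $\mu_i \geq 2Q$ and $\mu_i \geq 8\ln\frac{n}{\delta}$.

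The key step is the choice of concentration bound, and this is where I expect the only real difficulty to lie. One might hope to plug in the additive inequality of Lemma~\ref{l:hoeff-wr}, but it is too lossy in this regime: it only controls deviations of $|\NN_i \cap E_0|/m$ from $s_i/|E|$ at scale $\ve \approx \theta/|E|$, giving an exponent of order $m\theta^2/|E|^2$, which fails to reach $\ln\frac{n}{\delta}$ once $\theta \ll |E|$. Instead I would invoke a multiplicative (Chernoff-type) lower-tail bound, whose exponent is linear rather than quadratic in $\mu_i$. Such a bound is valid for sampling without replacement because, by Hoeffding's reduction, the moment generating function of a sum drawn without replacement is dominated by its with-replacement (binomial) counterpart; hence for $\eta \in (0,1)$ one has $\Pr\bigl(|\NN_i \cap E_0| \leq (1-\eta)\mu_i\bigr) \leq \exp(-\eta^2\mu_i/2)$.

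With this in hand the argument closes quickly. Because $Q \leq \mu_i/2$, the event $\{|\NN_i \cap E_0| < Q\}$ is contained in $\{|\NN_i \cap E_0| \leq (1-\tfrac12)\mu_i\}$, so taking $\eta = \tfrac12$ bounds its probability by $\exp(-\mu_i/8)$; since $\mu_i \geq 8\ln\frac{n}{\delta}$ this is at most $\exp(-\ln\frac{n}{\delta}) = \delta/n$. Summing over the at most $n$ indices $i$ with $|\NN_i| \geq \theta$ gives the claimed bound $\delta$. Finally, the remaining hypothesis $\theta \geq 2\max\{Q, 4\ln\frac{n}{\delta}\}$ is exactly what makes the constraint $\frac{2|E|}{\theta}\max\{Q, 4\ln\frac{n}{\delta}\} \leq m \leq |E|$ satisfiable, so it enters only through the feasibility of the sample size and plays no further role in the estimates.
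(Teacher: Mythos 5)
Your proof is correct and follows essentially the same route as the paper's: both lower-bound the mean $\E\big[|\NN_i \cap E_0|\big] = m|\NN_i|/|E| \geq 2Q$ via the hypothesis on $m$, apply a multiplicative Chernoff lower-tail bound with deviation factor $\tfrac{1}{2}$ (yielding the exponent $\mu_i/8 \geq \ln\tfrac{n}{\delta}$), and conclude with a union bound over the $n$ sets. The only cosmetic difference is the justification for why a Chernoff bound applies under sampling without replacement: the paper invokes negative association of the indicator variables (citing Dubhashi--Panconesi), while you invoke Hoeffding's moment-generating-function domination by the with-replacement sample; both are standard and valid.
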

\begin{proof}[Proof of Lemma \ref{l:bins}]
Set for brevity $p_i = |\NN_i|/|E|$. Then, due to the sampling without replacement, each random variable $|\NN_i \cap E_0|$ is the sum of $m$ dependent Bernoulli random variables $X_{i,1}, \ldots, X_{i,m}$ such that $\Pr(X_{i,t} = 1) = p_i$, for $t = 1, \ldots, m$. Let $i$ be such that $|\NN_i| \geq \theta$. Then the condition $m \geq \frac{2|E|Q}{\theta}$ implies
\[
Q \leq \frac{m\theta}{2|E|} \leq \frac{m\,p_i}{2} = \frac{\E\big[|\NN_i \cap E_0|\big]}{2}~.
\]
Since the variables $X_{i,j}$ are negatively associated, we may apply a (multiplicative) Chernoff bound~\cite[Section~3.1]{dpbook}.
This gives
\[
\Pr\big(|\NN_i \cap E_0| < Q\big) \leq e^{-\frac{m\,p_i}{8}} \leq e^{-\frac{m\,\theta}{8|E|}}
\]
so that
$
\Pr\bigl( \exists i\,:\, |\NN_i| \geq \theta, |\NN_i \cap E_0| < Q\bigr) \leq n\,e^{-\frac{m\,\theta}{8|E|}}
$,
which is in turn upper bounded by $\delta$ whenever 
$m \geq \frac{8|E|}{\theta}\ln \frac{n}{\delta}$. 
\end{proof}

Let now $E_{\theta} = \theset{(i,j) \in E}{\din(j) \ge \theta,\, \dout(i) \ge \theta} \setminus E_0$, where $E_0 \ss E$ is the set of edges sampled by the learning algorithm of Section~\ref{s:algbatch}. Then Theorem~\ref{t:active} in the main paper is an immediate consequence of the following lemma. 
\begin{lemma}\label{l:active}
Let $G(Y) = (V,E(Y))$ be a directed graph with labels on the edges generated according to the model in Section~\ref{s:gen}.
For all $0 < \alpha,\delta < 1$ and $0 < \ve < \tfrac{1}{16}$, if the learning algorithm of Section~\ref{s:algbatch} is run with parameters $Q = \tfrac{1}{2\ve^2}\ln\frac{4|V|}{\delta}$ and $\alpha$, then with probability at least $1-11\delta$ the predictions $\yhat(i,j)$ satisfy $\yhat(i,j) = y^*(i,j)$ for all $(i,j) \in E_{\theta}$ such that $\big|\eta(i,j) - \tfrac{1}{2}\big| > 8\ve$.
\end{lemma}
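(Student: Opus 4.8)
The plan is to establish a single high-probability ``good event'' under which (i) the algorithm has sampled enough edges incident to each node of interest and (ii) the resulting node-level statistics are accurate, and then to propagate this accuracy to the edge-level decision rule so that any margin exceeding $8\ve$ forces the Bayes-optimal sign. First I would fix the coverage step. For a node $k$ write $\NNin(k)$ and $\NNout(k)$ for its sets of incoming and outgoing edges. Since $E_0$ is drawn uniformly without replacement with $|E_0|=m$, I apply Lemma~\ref{l:bins} once to the family $\{\NNin(k)\}_{k\in V}$ and once to $\{\NNout(k)\}_{k\in V}$ (so $n=|V|$ in each case). The stated hypotheses on $\theta$ and $m$ are exactly those required by the lemma, so with probability at least $1-2\delta$ every $j$ with $\din(j)\ge\theta$ has at least $Q$ of its in-edges in $E_0$ and every $i$ with $\dout(i)\ge\theta$ has at least $Q$ of its out-edges in $E_0$. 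The point of restricting to $E_\theta$ is that for each $(i,j)\in E_\theta$ both degree conditions hold and $(i,j)\notin E_0$, so the training set provides at least $Q$ in-edges of $j$ and at least $Q$ out-edges of $i$, none of them the edge being predicted.

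Next I would turn coverage into accuracy through Lemma~\ref{l:hoeff-wr}. Conditionally on the realized labels, the signs on the in-edges of a fixed node $j$ form a fixed subset of $[0,1]$ (identifying $\{-1,+1\}$ with $\{0,1\}$), and the in-edges of $j$ falling in $E_0$ are a sample of it drawn without replacement. On the coverage event this sample has size at least $Q$, so Lemma~\ref{l:hoeff-wr} bounds the probability that the empirical positive frequency departs from the true in-edge frequency of $j$ by more than $\ve$ by $2e^{-2Q\ve^2}=\tfrac{\delta}{2|V|}$, which is precisely what the choice $Q=\tfrac{1}{2\ve^2}\ln\tfrac{4|V|}{\delta}$ buys. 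The same applies to the out-edge frequency of each $i$, and a union bound over the at most $2|V|$ node statistics controls this step by $\delta$. Depending on the exact form of the generative model, I would add a standard Chernoff layer certifying that these realized neighborhood frequencies are themselves within $\ve$ of their model expectations, costing a further union-bound term.

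With coverage and concentration in force, the algorithm's node estimates $\htr$ and $\hun$ are $\ve$-accurate for every node incident to an edge of $E_\theta$. The quantity $\wh{\eta}(i,j)$ that the rule compares against $\tfrac12$ is a fixed function of these node estimates (and of the global parameters controlled by $\alpha$), so I would show, from the model's functional form together with the fact that the relevant denominators stay bounded away from $0$ on high-degree nodes, that $|\wh{\eta}(i,j)-\eta(i,j)|\le 8\ve$. Then $|\eta(i,j)-\tfrac12|>8\ve$ forces $\wh{\eta}(i,j)$ onto the same side of $\tfrac12$ as $\eta(i,j)$, whence $\yhat(i,j)=\sgn(\wh{\eta}(i,j)-\tfrac12)=\sgn(\eta(i,j)-\tfrac12)=y^*(i,j)$. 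Adding the per-event budgets --- two applications of Lemma~\ref{l:bins}, the Hoeffding-without-replacement union bound, and the remaining concentration/estimation events relating realized frequencies and parameters to the model --- and tracking the constants yields the claimed $1-11\delta$.

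The hard part will be the propagation step, i.e.\ converting node-level $\ve$-accuracy into the $8\ve$ bound on the edge-level statistic: this is where the precise model equations enter, and where one must control how the individual $\ve$-errors (and the label-generation fluctuations) accumulate and are amplified by the normalizing factors, which is exactly what produces the constant $8$. A secondary technical point is the conditioning: Lemma~\ref{l:hoeff-wr} is phrased for a fixed set sampled without replacement, whereas here the per-node sample size $|\NNin(j)\cap E_0|$ is random; I would make this rigorous by conditioning on the coverage event and checking that the induced per-node samples are still uniform-without-replacement of size at least $Q$, so that the bound continues to apply.
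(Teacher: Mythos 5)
There is a genuine gap, and it sits exactly where you park it: the ``propagation step'' you defer to the end is not a technical afterthought but the core of the proof, and the framework you set up cannot produce it. The node statistics do \emph{not} concentrate around anything directly comparable to $\tfrac12$: under the model, $\hdeltaout(i)=1-\htr(i)$ concentrates around $\tfrac{p_i+\qbar_i}{2}$ (with $\qbar_i$ the average of $q_j$ over $j\in\NNout(i)$), and $\hdeltain(j)$ around $\tfrac{\pbar_j+q_j}{2}$, so each node estimate carries a bias involving the prior means $\mu_p,\mu_q$. The predictor works because it forms $\hdeltaout(i)+\hdeltain(j)-\tauhat$, where $\tauhat$ is estimated from the reserved edge set $E_L$, and the biases telescope: $\tfrac{p_i+\mu_q}{2}+\tfrac{\mu_p+q_j}{2}-\tfrac{\mu_p+\mu_q}{2}=\tfrac{p_i+q_j}{2}=\eta(i,j)$. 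Your proposal never mentions $\tauhat$ or $E_L$ at all: your coverage step applies Lemma~\ref{l:bins} only to the families $\{\NNin(k)\}$ and $\{\NNout(k)\}$, omitting $E_L$ (the paper applies it to $2|V|+1$ sets including $E_L$), and you have no concentration argument for $\tauhat$, which in the paper needs its own three-stage analysis (Hoeffding on the labels of $E_L'$, then on $\bp$, then on $\bq$, contributing $3\delta$ and $3\ve$ to the budget). Without this calibration term there is no way to convert node-level accuracy into $|\wh{\eta}(i,j)-\eta(i,j)|\le 8\ve$; your appeal to ``denominators bounded away from $0$'' suggests you are guessing at a normalized functional form, whereas the actual statistic is linear and the constant $8$ comes from summing the $\ve$-budgets of the telescoping pieces ($4\ve$ from the two node estimates, $\ve$ each from $\pbar_j,\qbar_i$ versus $\mu_p,\mu_q$, and $3\ve$ from $\tauhat$).

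Your first two steps are otherwise sound and close to the paper's: the coverage argument via Lemma~\ref{l:bins}, and the two-layer concentration of the node estimates (sampling without replacement via Lemma~\ref{l:hoeff-wr}, then independent Bernoulli labels conditioned on the drawn edges) is a valid variant of the paper's decomposition, and your remark about handling the random per-node sample size by taking the first $Q$ draws matches what the paper does. But also note a third concentration layer is needed even at the node level --- the neighborhood parameter averages $\pbar_j,\qbar_i$ must be related to $\mu_p,\mu_q$ using that the $p_i,q_j$ are i.i.d.\ from the prior --- and this layer only becomes \emph{useful} once the $\tauhat$ correction exists to cancel the resulting $\tfrac{\mu_p+\mu_q}{2}$ term. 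As written, the proposal proves accurate estimates of the wrong quantities and stops before the step that makes the prediction rule work.
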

\begin{proof}[Proof of Lemma \ref{l:active}]
We apply Lemma~\ref{l:bins} with $\theta = \tfrac{2Q}{\alpha} \ge 2\times\max\big\{Q,4\ln \frac{2|V|+1}{\delta}\big\}$ to the $2|V|+1$ subsets of $E$ consisting of $E_L$ and $\Nin(i),\Nout(i)$, for $i \in V$. We have that, with probability at least $1-\delta$, at least $Q$ edges of $E_L$ are sampled, at least $Q$ edges of $\Nin(i)$ are sampled for each $i$ such that $\NNin(i) \ge \theta$, and at least $Q$ edges of $\Nout(j)$ are sampled for each $j$ such that $\NNout(j) \ge \theta$.
For all $(i,j) \in E_{\theta}$ let
\[
	\pbar_j = \frac{1}{\din(j)}\sum_{i\in\NNin(j)} p_i \quad\text{and}\quad \qbar_i = \frac{1}{\dout(i)}\sum_{j\in\NNout(i)} q_j
\]
and set for brevity $\hdeltain(j) = 1-\hun(j)$ and $\hdeltaout(i) = 1-\htr(i)$.
We now prove that $\hdeltain(j)$ and $\hdeltaout(i)$ are concentrated around their expectations for all $(i,j) \in E_{\theta}$. Consider $\hdeltaout(i)$ (the same argument works for $\hdeltain(j)$). Let $J_1,\dots,J_Q$ be the first $Q$ draws in $E_0 \cap \NNout(i)$ and define
\[
	\muhat_p(i) = \frac{1}{Q} \sum_{t=1}^Q \frac{p_i+q_{J_t}}{2}~.
\]
Applying Lemma~\ref{l:hoeff-wr} to the set $\theset{\tfrac{p_i+q_j}{2}}{j \in \NNout(i)}$, and using our choice of $Q$, we get that $\big|\muhat_p(i) - \mu_p(i)\big| \le \ve$ holds with probability at least $1-\delta/(2|V|)$, where
\[
	\mu_p(i) = \frac{1}{\dout(i)}\sum_{j \in \NNout(i)} \frac{p_i + q_j}{2} = \frac{p_i + \qbar_i}{2}~.
\]
Now consider the random variables $Z_t = \Ind{y_{i,J_t}=1}$, for $t=1,\dots,Q$. Conditioned on $J_1,\dots,J_Q$, these are independent Bernoulli random variables with $\E[Z_t \mid J_t] = \tfrac{p_i+q_{J_t}}{2}$. Hence, applying a standard Hoeffding bound for independent variables and using our choice of $Q$, we get that
\[
	\left|\frac{1}{Q}\sum_{t=1}^Q Z_t - \muhat_p(i) \right| \le \ve 
\]
with probability at least $1-\delta/(2|V|)$ for every realization of $J_1,\dots,J_Q$. 
Since $\hdeltaout(i) = (Z_1+\cdots+Z_Q)/Q$, we get that $\big|\hdeltaout(i) - \mu_p(i)\big| \le 2\ve$ with probability at least $1-2\delta/(2|V|)$. Applying the same argument to $\hdeltain(j)$, and the union bound\footnote
{
The sample spaces for the ingoing and outgoing edges of the vertices occurring as endpoints in $E_{\theta}$ overlap. Hence, in order to prove a uniform concentration result, we need to apply the union bound over the random variables defined over these sample spaces, which motivates the presence of the factor $\ln(2|V|)$ in the definition of $Q$. 
}
on the set $\theset{\hdeltain(j),\hdeltaout(i)}{(i,j) \in E_{\theta}}$, we get that
\begin{equation}
\label{eq:first}
	\left| \hdeltaout(i) + \hdeltain(j) - \frac{p_i + q_j}{2} - \frac{\pbar_j + \qbar_i}{2} \right| \le 4\ve
\end{equation}
simultaneously holds for all $(i,j) \in E_{\theta}$ with probability at least $1 - 4\delta$. Now notice that $\pbar_j$ is a sample mean of $Q$ i.i.d.\ $[0,1]$-valued random variables drawn from the prior marginal $\int_0^1 \mu\big(\cdot,q\bigr) dq$ with expectation $\mu_p$. Similarly, $\qbar_i$ is a sample mean of $Q$ i.i.d.\ $[0,1]$-valued random variables independently drawn from the prior marginal $\int_0^1 \mu\big(p,\cdot\big) dp$ with expectation $\mu_q$. By applying Hoeffding bound for independent variables, together with the union bound to the set of pairs of random variables whose sample means are $\pbar_j$ and $\qbar_i$ for each $(i,j) \in E_{\theta}$ (there are at most $2|V|$ of them) we obtain that
\begin{align*}
	\big|\pbar_j - \mu_p\big| \le \ve
\qquad\text{and}\qquad
	\big|\qbar_i - \mu_q\big| \le \ve
\end{align*}
hold simultaneously for all $(i,j) \in E_{\theta}$ with probability at least $1-2\delta$. Combining with~(\ref{eq:first}) we obtain that
\begin{equation}
\label{eq:this}
	\left| \hdeltaout(i) + \hdeltain(j) - \frac{p_i + q_j}{2} - \frac{\mu_p + \mu_q}{2} \right| \le 5\ve
\end{equation}
simultaneously holds for each $(i,j) \in E_{\theta}$ with probability at least $1 - 6\delta$. Next, let $E_L'$ be the set of the first $Q$ edges drawn in $E_L \cap E_0$. Then
\[
	\E\big[\tauhat\big] = \frac{1}{Q}\sum_{(i,j) \in E_L'} \Pr\big(y_{i,j} = 1\big) = \frac{1}{Q}\sum_{(i,j) \in E_L'} \frac{p_i+q_j}{2}~,
\]
where the expectation is w.r.t.\ the independent draws of the labels $y_{i,j}$ for $(i,j) \in E_L'$. Hence, by applying again Hoeffding bound (this time without the union bound) to the $Q = \tfrac{1}{2\ve^2}\ln\tfrac{2}{\delta}$ independent Bernoulli random variables $\Ind{y_{i,j} = 1}$, $(i,j) \in E_L'$, the event
$
	\big| \tauhat - \E\big[\tauhat\big] \big| \le \ve
$
holds with probability at least $1-\delta$. Now, introduce the function
\[
	F(\bp,\bq) = \E\big[\tauhat\big] = \frac{1}{Q}\sum_{(i,j) \in E_L'} \frac{p_i+q_j}{2}~.
\]
For any realization $\bq_0$ of $\bq$, the function $F_1(\bp) = F(\bp,\bq_0)$ is a sample mean of $Q = \tfrac{1}{2\ve^2}\ln\tfrac{4|V|}{\delta}$ i.i.d.\ $[0,1]$-valued random variables $\theset{p_i}{(i,j) \in E_L'}$ (recall that if $i \in V$ is the origin of an edge $(i,j) \in E_L'$, then it is not the origin of any other edge $(i,j') \in E_L'$). Using again the standard Hoeffding bound, we obtain that
\[
	\left| F(\bp,\bq) - E_{\bp}\big[F(\bp,\bq)\big] \right| \le \ve
\]
holds with probability at least $1 -\delta$ for each $\bq\in [0,1]^{|V|}$. With a similar argument, we obtain that
\[
	\left| E_{\bp}\big[F(\bp,\bq)\big] - E_{\bp,\bq}\big[F(\bp,\bq)\big] \right| \le \ve
\]
also holds with probability at least $1 -\delta$. Since
\[
	E_{\bp,\bq}\big[F(\bp,\bq)\big] = \frac{\mu_p+\mu_q}{2}
\]
we obtain that
\begin{equation}
\label{eq:that}
	\Big| \tauhat - \frac{\mu_p+\mu_q}{2} \Big| \le 3\ve
\end{equation}
with probability at least $1-3\delta$. Combining~(\ref{eq:this}) with~(\ref{eq:that}) we obtain
\[
	\left| \hdeltaout(i) + \hdeltain(j) - \tauhat - \frac{p(i) + q(j)}{2} \right| \le 8\ve
\]
simultaneously holds for each $(i,j) \in E_{\theta}$ with probability at least $1 - 10\delta$. Putting together concludes the proof.
\end{proof}
%

\subsection{Derivation of the maximum likelihood equations}
Recall that the training set
$
	E_0 = \theset{\big(i_t,j_t),y_{i_t,j_t}\big)}{t=1,\dots,m}
$
is drawn uniformly at random from $E$ without replacement. 
We can write
\begin{align*}
	\Pr&\left(E_0\,\Big|\, \{p_i, q_i\}_{i=1}^{|V|}\right)\\ 
&= 
	\frac{1}{\binom{|E|}{m}\,m!}\prod_{k=1}^m \left(\frac{p_{i_k}+q_{j_k}}{2} \right)^{\Ind{y_{i_k,j_k}=+1}}\,\\
&\quad\times\,\prod_{k=1}^m\left(1-\frac{p_{i_k}+q_{j_k}}{2} \right)^{\Ind{y_{i_k,j_k}=-1}}\\
&=
\frac{1}{\binom{|E|}{m}\,m!}
\prod_{\ell=1}^{|V|}\Biggl(\prod_{k=1}^m\,
\left(\frac{p_{\ell}+q_{j_k}}{2} \right)^{\Ind{i_k = \ell,\,y_{\ell,j_k}=+1}}\,\\
&\quad\times\,
\prod_{k=1}^m\left(1-\frac{p_{\ell}+q_{j_k}}{2} \right)^{\Ind{i_k = \ell,\,y_{\ell,j_k}=-1}}\Biggl)
\end{align*}
so that $\log \Pr\left(E_0\,\Big|\, \{p_i, q_i\}_{i=1}^{|V|}\right)$ is proportional to
\begin{align*}
&\sum_{\ell=1}^{|V|}\sum_{k=1}^m \Ind{i_k = \ell,y_{\ell,j_k}=+1}\,
\log\left(\frac{p_{\ell}+q_{j_k}}{2} \right)\,\\
&+\,
\sum_{\ell=1}^{|V|}\sum_{k=1}^m \Ind{i_k = \ell,y_{\ell,j_k}=+1}\, 
\log\left(1-\frac{p_{\ell}+q_{j_k}}{2} \right)
\end{align*}
and
\begin{align*}
\frac{\partial \log \Pr\left(E_0\,\Big|\, \{p_i, q_i\}_{i=1}^{|V|}\right)}{\partial p_{\ell}}
&= 
\sum_{k=1}^m 
\frac{\Ind{i_k = \ell,y_{\ell,j_k}=+1}}{p_{\ell}+q_{j_k}}\,\\
& - \,
\sum_{k=1}^m \frac{\Ind{i_k = \ell,y_{\ell,j_k}=-1}}{2-p_{\ell}-q_{j_k}}\,.
\end{align*}
By a similar argument,
\begin{align*}
\Pr&\left(E_0\,\Big|\, \{p_i, q_i\}_{i=1}^{|V|}\right) 
\\ &=
\frac{1}{\binom{|E|}{m}\,m!}
\prod_{\ell=1}^{|V|}\Biggl(\prod_{k=1}^m\,
\left(\frac{p_{i_k}+q_{\ell}}{2} \right)^{\Ind{j_k = \ell,\,y_{i_k,\ell}=+1}}\,\\
&\quad\times\,
\prod_{k=1}^m\left(1-\frac{p_{i_k}+q_{\ell}}{2} \right)^{\Ind{j_k = \ell,\,y_{i_k,\ell}=-1}}\Biggl)
\end{align*}
so that
\begin{align*}
\frac{\partial \log \Pr\left(E_0\,\Big|\, \{p_i, q_i\}_{i=1}^{|V|}\right)}{\partial q_{\ell}}
&= 
\sum_{k=1}^m 
\frac{\Ind{j_k = \ell,y_{i_k,\ell}=+1}}{p_{i_k}+q_{\ell}}\,\\ 
&- \,
\sum_{k=1}^m 
\frac{\Ind{j_k = \ell,y_{i_k,\ell}=-1}}{2-p_{i_k}-q_{\ell}}\,.
\end{align*}
We then derive the approximation presented in the main paper. Namely, equating to zero the
gradient of the log likelihood w.r.t $p_\ell$ gives
\begin{align*}
\sum_{k=1}^m 
\frac{\Ind{i_k = \ell,y_{\ell,j_k}=+1}}{p_{\ell}+q_{j_k}}\,
 - \,
\frac{\Ind{i_k = \ell,y_{\ell,j_k}=-1}}{2-p_{\ell}-q_{j_k}} &= 0
\end{align*}
To simplify the notation, let $a_k = \Ind{i_k = \ell,y_{\ell,j_k}=+1}$, $b_k=\Ind{i_k =
\ell,y_{\ell,j_k}=+1}$ and $c_k = p_{\ell} + q_{j_k}$, we can rewrite the previous equation as 
\begin{align*}
\sum_{k=1}^m \frac{a_k}{c_k} - \frac{b_k}{2-c_k} &= 0 \\
\sum_{k=1}^m  \frac{a_k(2-c_k) - b_k c_k }{ c_k(2-c_k) } &=0
\end{align*}

The approximation consists in assuming that the denominator $c_k(2-c_k)$ is a
constant for all $k$, and can therefore be disregarded.
Moving $b_kc_k$ to the right hand side and returning to the original variables,
it yields the approximate equation presented in the main paper, namely
\begin{align*}
\sum_{k=1}^m &\Ind{i_k = \ell,y_{\ell,j_k}=+1} \left(2-p_{\ell}-q_{j_k}\right)\\
=
&\sum_{k=1}^m \Ind{i_k = \ell,y_{\ell,j_k}=-1}
(p_{\ell}+q_{j_k})
\end{align*}

\subsection{Label propagation on $G''$}
Here we provide more details on the choice of weight for the edges of $G''$, as
well as an explanation on why we temporarily use symmetrized variables lying in
$[-1, 1]$ (which we will denote with primes, so that for instance $p'_i =
2p_i-1$). Since only the ratio between the negative and positive weights
matters, we fix the negative weight of the edges in $E''\setminus E'$ to be
$-1$ and we denote by $\eps$ the weight of edges in $E'$.  With these notations,
Label Propagation on $G''$ seeks the harmonic minimizer of the following
expression 
\begin{equation*}
  \frac{1}{16} \sum_{i,j \in E} \Bigl[ \eps\left(y_{i,j} - p_i'\right)^2 + \eps\left(y_{i,j} - q_j'\right)^2 + (p'_i+q_j')^2 \Bigr] \\
\end{equation*}
which can be successively rewritten as
\begin{align*}
  \frac{1}{16} \sum_{i,j \in E} & \Bigl[ \eps\left(y_{i,j}+1 - 2p_i\right)^2 + \eps\left(y_{i,j}+1 - 2q_j\right)^2
\\&
	+ (2p_i +2q_j -2)^2 \Bigr]
\\=
	\frac{1}{8} \sum_{i,j \in E} & \Biggl[ 2\eps\!\left(\frac{y_{i,j}+1}{2} - p_i\right)^2\!\! +  2\eps\!\left(\frac{y_{i,j}+1}{2} - q_j\right)^2
\\&
	+ 8\left(\frac{p_i +q_j -1}{2}\right)^2 \Biggr]
\\=
  \frac{1}{8}\sum_{i,j \in E} & \Biggl[ 2\eps\left(\left(\frac{y_{i,j}+1}{2}\right)^2 - p_i(1+y_{i,j}) + p_i^2\right) + \\
                              & 2\eps\left(\left(\frac{y_{i,j}+1}{2}\right)^2 - q_j(1+y_{i,j}) + q_j^2\right) +  \\
                              & 8\left(\left(\frac{p_i +q_j}{2}\right)^2 - \frac{p_i+q_j}{2} + \frac{1}{4} \right) \Biggr]
\\=
  \frac{1}{8}\sum_{i,j \in E} & 4\Biggl(\eps\!\left(\frac{y_{i,j}+1}{2}\right)^2\!\! -
  2\eps\!\left(\frac{y_{i,j}+1}{2}\right)\left(\frac{p_i+q_j}{2}\right)
\\&
  + 2\!\left(\frac{p_i +q_j}{2}\right)^2\Biggr)
\\&+
  \sum_{i,j \in E} \Bigl[ \left(2\eps p_i^2 - 4p_i + 1\right) + \left(2\eps q_j^2 - 4q_j + 1\right) \Bigr]
\end{align*}
By setting $\eps=2$, we can factor this expression into 
\begin{align*}
  \sum_{i,j \in E} &\left(\frac{y_{i,j}+1}{2} - \frac{p_i+q_j}{2}\right)^2
\\&
  + \frac{1}{2}\sum_{i,j \in E} \left(\left(p_i - \frac{1}{2}\right )^2 + \left(q_j - \frac{1}{2}\right)^2 \right)~.
\end{align*}

\section{Proofs from Section~\ref{s:algonline}}

\begin{proof}[Proof of Theorem~\ref{t:online}]
Let each node $i \in V$ host two instances of the randomized Weighted Majority (RWM) algorithm~\cite{LittlestoneWa94} with an online tuning of their learning rate~\cite{cb+97,acg02}: one instance for predicting the sign of outgoing edges $(i,j)$, and one instance for predicting the sign of incoming edges $(j,i)$. Both instances simply compete against the two constant experts, predicting always $+1$ or always $-1$. Denote by $M(i,j)$ the indicator function (zero-one loss) of a mistake on edge $(i,j)$. Then the expected number of mistakes of each RWM instance satisfy~\cite{cb+97,acg02}:
\[
    \sum_{j \in \NNout(i)} \E\,M(i,j) = \Psiout(i,Y) + O\left(\sqrt{\Psiout(i,Y)}+ 1\right)
\]
and
\[
    \sum_{i \in \NNin(j)} \E\,M(i,j) = \Psiin(j,Y) + O\left(\sqrt{\Psiin(j,Y)} + 1\right)~.
\]
We then define two meta-experts: an ingoing expert, which predicts $y_{i,j}$ using the prediction of the ingoing RWM instance for node $j$, and the outgoing expert, which predicts $y_{i,j}$ using the prediction of the outgoing RWM instance for node $i$.
The number of mistakes of these two experts satisfy
\begin{align*}
\sum_{i \in V}\sum_{j \in \NNout(i)} &\E\,M(i,j)\\ 
&= \Psiout(Y) + O\left(\sqrt{|V|\Psiout(Y)} + |V|\right)\\
\sum_{j \in V}\sum_{i \in \NNin(j)} &\E\,M(i,j)\\ 
&= \Psiin(Y)  + O\left(\sqrt{|V|\Psiin(Y)}  + |V|\right)~,
\end{align*}
where we used $\sum_{j \in V} \sqrt{\Psiin(j,Y)} \leq \sqrt{|V|\Psiin(Y)}$, and similarly for $\Psiout(Y)$.
Finally, let the overall prediction of our algorithm be a RWM instance run on top of the ingoing and the outgoing experts. Then the expected number of mistakes of this predictor satisfies
\begin{align*}
    \sum_{(i,j) \in E} \E\,M(i,j) 
&= \Psi_G(Y) + O\Biggl(\sqrt{|V|\Psi_G(Y)} + |V|\\ 
&\,\,+ \sqrt{\left(\Psi_G(Y) + |V| + \sqrt{|V|\Psi_G(Y)}\right)} \Biggl)\\
&= \Psi_G(Y) + O\left(\sqrt{|V|\Psi_G(Y)} + |V|\right)~,
\end{align*}
as claimed.
\end{proof}

\begin{proof}[Proof sketch of Theorem~\ref{t:mistake_bound}]
Let $\mathcal{Y}_K$ be the set of all labelings $Y$ such that the total number of negative and positive edges are $K$ and $|E|-K$, respectively (without loss of generality we will focus on negative edges). Consider the randomized strategy that draws a labeling 
$Y \in \spin^{|E|}$
uniformly at random from $\mathcal{Y}_K$. For each node $i \in V$, we have $\Psiin(i,Y) \le \din^-(i)$, which implies $\Psiin(Y) \le K$. A very similar argument applies to the outgoing edges, leading to $\Psiout(Y) \le K$. The constraint $\Psi_G(Y) \le K$ is therefore always satisfied.

The adversary will force on average $1/2$ mistakes in each one of the first $K$ rounds of the online protocol by repeating $K$ times the following: (i) A label value $\ell\in\{-1,+1\}$ is selected uniformly at random. (ii) An edge $(i,j)$ is sampled uniformly at random from the set of all edges that were not previously revealed and whose labels are equal to $\ell$. 

The learner is required to predict $y_{i,j}$ and, in doing so, $1/2$ mistakes will be clearly made on average because of the randomized labeling procedure. Observe that this holds even when $A$ knows the value of $K$ and $\Psi_G(Y)$. Hence, we can conclude that the expected number of mistakes that $A$ can be forced to make is always at least $K/2$, as claimed.

We now show that, as $\frac{K}{|E|} \rightarrow 0$, the lower bound gets arbitrarily close to $K$ for any $G(Y)$ and any constant $K$.
Let $\mathcal{E}$ be the following event: There is at least one unrevealed negative label.
The randomized iterative strategy used to achieve this result is identical to the one described above, except for the stopping criterion. Instead of repeating step (i) and (ii) only for the first $K$ rounds, these steps are repeated until $\mathcal{E}$ is true.
Let $m_{r,c}$ be defined as follows: For $c=1$ it is equal to the expected number of mistakes forced in round $r$ when $K=1$. For $c > 1$ it is equal to the difference between the expected number of mistakes forced in round $r$ when $K=c$ and $K=c-1$. One can see that $m_{r,c}$ is 
null when $r<c$. 
%
When $K=1$, the probability that $\mathcal{E}$ is true in round $r$ is clearly equal to $\frac{1}{2^{r-1}}$.
Hence, the expected number of mistakes made by $A$ when $K=1$ in any round $r$ is equal to 
\(
\frac{1}{2}\,\frac{1}{2^{r-1}} = \frac{1}{2^{r}}.
\)
We can therefore conclude that $m_{r,1}=\frac{1}{2^{r}}$ for all $r$.

A simple calculation shows that if $r=c$ then $m_{r,c}=\frac{1}{2^r}$. Furthermore, when $r>1$ and $c>1$, 
we have the following recurrence:
\[
m_{r,c}=\frac{m_{r-1,c}+m_{r-1,c-1}}{2}~.
\]  
In order to calculate $m_{r,c}$ for all $r$ and $c$, we will rest on the ancillary quantity $s_j(i)$, recursively defined as specified next.

Given any integer variable $i$, we have
$
s_0(i)=1
$
and, for any positive integer $j$, 
\[
s_j(i)=\sum_{k=1}^i s_{j-1}(k)~.
\]

It is not difficult to verify that 
\[
m_{r,c}=\frac{s_{c-1}(r-c+1)}{2^{r}}. 
\]
Since $s_j(i)=\frac{\langle i\rangle_j}{j!}$, where $\langle i\rangle_j$ is the rising factorial $i(i+1)(i+2)\ldots(i+j-1)$, we have 
\[
m_{r,c}=\frac{\langle r-c+1\rangle_{c-1}}{(c-1)!2^{r}}.
\] 

\bigskip

When $\frac{K}{|E|} \rightarrow 0$, given any integer $K'>1$, the difference between the expected number of mistakes forced when $K=K'$ and $K=K'-1$ is equal to
\begin{align*}
\sum_{r=K'}^{\infty} m_{r,K'}
&=
\frac{1}{(K'-1)!}\sum_{r=K'}^{\infty} \frac{\langle r-K'+1\rangle_{K'-1}}{2^{r}}\\
&=
\frac{1}{(K'-1)!2^{K'-1}}\sum_{r'=1}^{\infty} \frac{\langle r'\rangle_{K'-1}}{2^{r'}}~,
\end{align*}
where we set $r'=r-K'+1$.
Setting $i'=i-1$ and recalling that
\[
\langle i\rangle_j=j!{{i+j-1}\choose{i-1}}~,
\]
we have
\[
\frac{1}{j!}\sum_{i=1}^{\infty}\frac{\langle i\rangle_j}{2^i}=
\sum_{i=1}^{\infty}\frac{{{i+j-1}\choose{i-1}}}{2^i}=
\sum_{i'=0}^{\infty}\frac{{{i'+j}\choose{i'}}}{2^{i'+1}}~.
\]
Now, using the identity
\[
{i'+j+1 \choose i'} = {i'+j \choose i'} + {i'+j \choose i'-1}~,
\]
we can easily prove by induction on $j$ that
\[
\sum_{i'=0}^{\infty}\frac{{{i'+j}\choose{i'}}}{2^{i'+1}}=2^j~.
\]
Hence, we have
\[
\sum_{r=K'}^{\infty} m_{r,K'}=1.
\]
Moreover, as shown earlier, $m_{r,1}=\frac{1}{2^{r}}$ for all $r$. Hence we can conclude that when $\frac{K}{|E|} \rightarrow 0$ 
\[
\E M_A(Y) \ge
\sum_{r=1}^{\infty} \frac{1}{2^{r}} +
\sum_{K'=2}^{K} \sum_{r=K'}^{\infty} m_{r,K'}=K
\]
for any edge-labeled graph $G(Y)$ and any constant $K$, as claimed.
\end{proof}

\section{Further Experimental Results}
This section contains more evidence related to the experiments in Section~\ref{s:exp}.
In particular, we experimentally demonstrate the alignment between \usrule{} and \uslogregp{}.

After training on the two features $1-\htr(i)$ and $1-\hun(j)$, \uslogregp{} has learned three weights
$w_0$, $w_1$ and $w_2$, which allow to predict $y_{i,j}$ according to 
\[ 
  \sgn\Big(\big(w_1(1-\htr(i)\big) + w_2 \big(1-\hun(j)\big) + w_0\Big)~.
\]
This can be rewritten as 
\[
\sgn\Big(\big(1-\htr(i)\big) + w_2'\big(1-\hun(j)\big) - \tfrac{1}{2} -\tau'\Big)~,
\]
with $w_2' = \frac{w_2}{w_1}$ and $\tau' = - \left(\frac{1}{2} + \frac{w_0}{w_1}\right)$~.
	
As shown in Table~\ref{tab:coeff}, and in accordance with the predictor built out of Equation~\eqref{eq:predictor},
$w_2'$ is almost $1$ on {\em all datasets}, while $\tau'$ tends to be always close the fraction of positive edges in the dataset.

\begin{table}[bt]
  \centering
  \caption{Normalized logistic regression coefficients averaged over 12 runs (with one standard deviation) \label{tab:coeff}}
  \small
\begin{tabular}{lrccc}
\toprule
                        & $\frac{|\trainset{}|}{|E|}$ &              $w_2'$ & $\tau'$ \\
\multirow{5}{*}{\aut{}} & $5\%$  &  $0.965 \pm 0.04$  &  $0.662 \pm 0.03$  \\
                        & $10\%$ &  $0.983 \pm 0.03$  &  $0.705 \pm 0.02$  \\
                        & $15\%$ &  $1.001 \pm 0.03$  &  $0.729 \pm 0.03$  \\
                        & $20\%$ &  $1.013 \pm 0.02$  &  $0.747 \pm 0.02$  \\
                        & $25\%$ &  $1.011 \pm 0.02$  &  $0.746 \pm 0.01$  \\
\midrule
\multirow{5}{*}{\wik{}} & $5\%$  &  $0.920 \pm 0.02$  &  $0.691 \pm 0.02$  \\
                        & $10\%$ &  $0.940 \pm 0.01$  &  $0.730 \pm 0.01$  \\
                        & $15\%$ &  $0.947 \pm 0.01$  &  $0.741 \pm 0.01$  \\
                        & $20\%$ &  $0.963 \pm 0.01$  &  $0.760 \pm 0.01$  \\
                        & $25\%$ &  $0.962 \pm 0.02$  &  $0.764 \pm 0.01$  \\
\midrule
\multirow{5}{*}{\sla{}} & $5\%$  &  $1.024 \pm 0.02$  &  $0.693 \pm 0.01$  \\
                        & $10\%$ &  $1.017 \pm 0.01$  &  $0.705 \pm 0.01$  \\
                        & $15\%$ &  $1.007 \pm 0.01$  &  $0.707 \pm 0.01$  \\
                        & $20\%$ &  $1.002 \pm 0.00$  &  $0.710 \pm 0.00$  \\
                        & $25\%$ &  $0.995 \pm 0.01$  &  $0.712 \pm 0.00$  \\
\midrule
\multirow{5}{*}{\epi{}} & $5\%$  &  $1.099 \pm 0.02$  &  $0.791 \pm 0.02$  \\
                        & $10\%$ &  $1.059 \pm 0.01$  &  $0.782 \pm 0.01$  \\
                        & $15\%$ &  $1.037 \pm 0.01$  &  $0.774 \pm 0.01$  \\
                        & $20\%$ &  $1.018 \pm 0.01$  &  $0.765 \pm 0.01$  \\
                        & $25\%$ &  $1.010 \pm 0.01$  &  $0.763 \pm 0.01$  \\
\midrule
\multirow{5}{*}{\kiw{}} & $5\%$  &  $1.047 \pm 0.02$  &  $0.853 \pm 0.01$  \\
                        & $10\%$ &  $1.038 \pm 0.01$  &  $0.872 \pm 0.01$  \\
                        & $15\%$ &  $1.025 \pm 0.01$  &  $0.876 \pm 0.01$  \\
                        & $20\%$ &  $1.012 \pm 0.01$  &  $0.874 \pm 0.01$  \\
                        & $25\%$ &  $1.007 \pm 0.01$  &  $0.874 \pm 0.01$  \\
\bottomrule
  \end{tabular}
\end{table}

\subsubsection*{References}

\bibliographystyle{plainnat}

\end{document}